\pgfplotsset{compat=1.14}
\newcommand{\colormatern}{yellow}
\newcommand{\colorspearmint}{violet}
\newcommand{\colorspherewarpingorigin}{red}
\newcommand{\addref}[1]{\textcolor{red}{[\textbf{REF}]}}
\newcommand{\ie}[1]{\textit{i.e.}\xspace}
\newcommand{\eg}[1]{\textit{e.g.}\xspace}
\DeclareMathOperator*{\argmax}{arg\; max}
\DeclareMathOperator*{\argmin}{arg\; min}
\DeclareMathOperator{\x}{\mathbf{x}}
\DeclareMathOperator{\aaa}{\mathbf{a}}
\DeclareMathOperator{\D}{\mathcal{D}}
\DeclareMathOperator{\dd}{\mathbf{b}}
\definecolor{darkgray}{rgb}{0.4,0.4,0.4}
\newtheorem{theorem}{Theorem}
\icmltitlerunning{BOCK : Bayesian Optimization with Cylindrical Kernels}
\begin{document}

\twocolumn[
\icmltitle{BOCK : Bayesian Optimization with Cylindrical Kernels}




\begin{icmlauthorlist}
\icmlauthor{ChangYong Oh}{equal}
\icmlauthor{Efstratios Gavves}{equal}
\icmlauthor{Max Welling}{equal,cifar}
\end{icmlauthorlist}

\icmlaffiliation{equal}{QUvA Lab, Informatic Institute, University of Amsterdam, Amsterdam, Netherlands}
\icmlaffiliation{cifar}{Canadian Institute for Advanced Research, Toronto, Canada}

\icmlcorrespondingauthor{ChangYong Oh}{changyong.oh0224@gmail.com}
\icmlcorrespondingauthor{Efstratios Gavves}{efstratios.gavves@gmail.com}
\icmlcorrespondingauthor{Max Welling}{m.welling@uva.nl}

\icmlkeywords{Bayesian Optimization, Hyperparameter Optimization, Gaussian Process, Kernel, Geometry, Warping, Radial kernel, High dimensional spaces, Change of coordinates, Transformation, Resolution, Boundary issue}

\vskip 0.3in
]



\printAffiliationsAndNotice{}  

\begin{abstract}

A major challenge in Bayesian Optimization is the \textit{boundary issue}~\cite{swersky2017improving} where an algorithm spends too many evaluations near the boundary of its search space.
In this paper we propose BOCK, Bayesian Optimization with Cylindrical Kernels, whose basic idea is to transform the ball geometry of the search space using a cylindrical transformation.
Because of the transformed geometry, the Gaussian Process-based surrogate model spends less budget searching near the boundary, while concentrating its efforts relatively more near the center of the search region, where we expect the solution to be located. 
We evaluate BOCK extensively, showing that it is not only more accurate and efficient, but it also scales successfully to problems with a dimensionality as high as 500. 
We show that the better accuracy and scalability of BOCK even allows optimizing modestly sized neural network layers, as well as neural network hyperparameters.

\end{abstract}

\section{Introduction}
When we talk about stars and galaxies we use \textit{parsecs} to describe structures, yet when we discuss the world around us we use \textit{meters}. In other words, the natural lengthscale scale with which we describe the world increases with distance away from us. We believe this same idea is useful when performing optimization in high dimensional spaces.

In Bayesian Optimization (or other forms of hyperparameter optimization) we define a cube or a ball and search for the solution inside that volume. The origin of that sphere is special in the sense that this represents the part of space with the highest probability if finding the solution. Moreover, in high dimensions, when we move outwards, the amount of volume contained in an annulus with width $\delta R$, $A(\mathbf{c}; R-\delta R ,R) = \{\mathbf{x} \vert R -\delta R < \Vert \mathbf{x} -\mathbf{c} \Vert < R\}$, grows exponentially with distance $R$. As such, if we would spend an equal amount of time searching each volume element $\delta V$, we would spend all our time at the boundary of our search region. This effective attraction to the places with more volume is the equivalent of an "entropic force" in physics, and in the case of optimization is highly undesirable, since we expect the solution at a small radius $R$. 

\begin{figure}[t!]
\vskip 0.2in
\begin{center}
\centerline{\includegraphics[width=1.1\columnwidth]{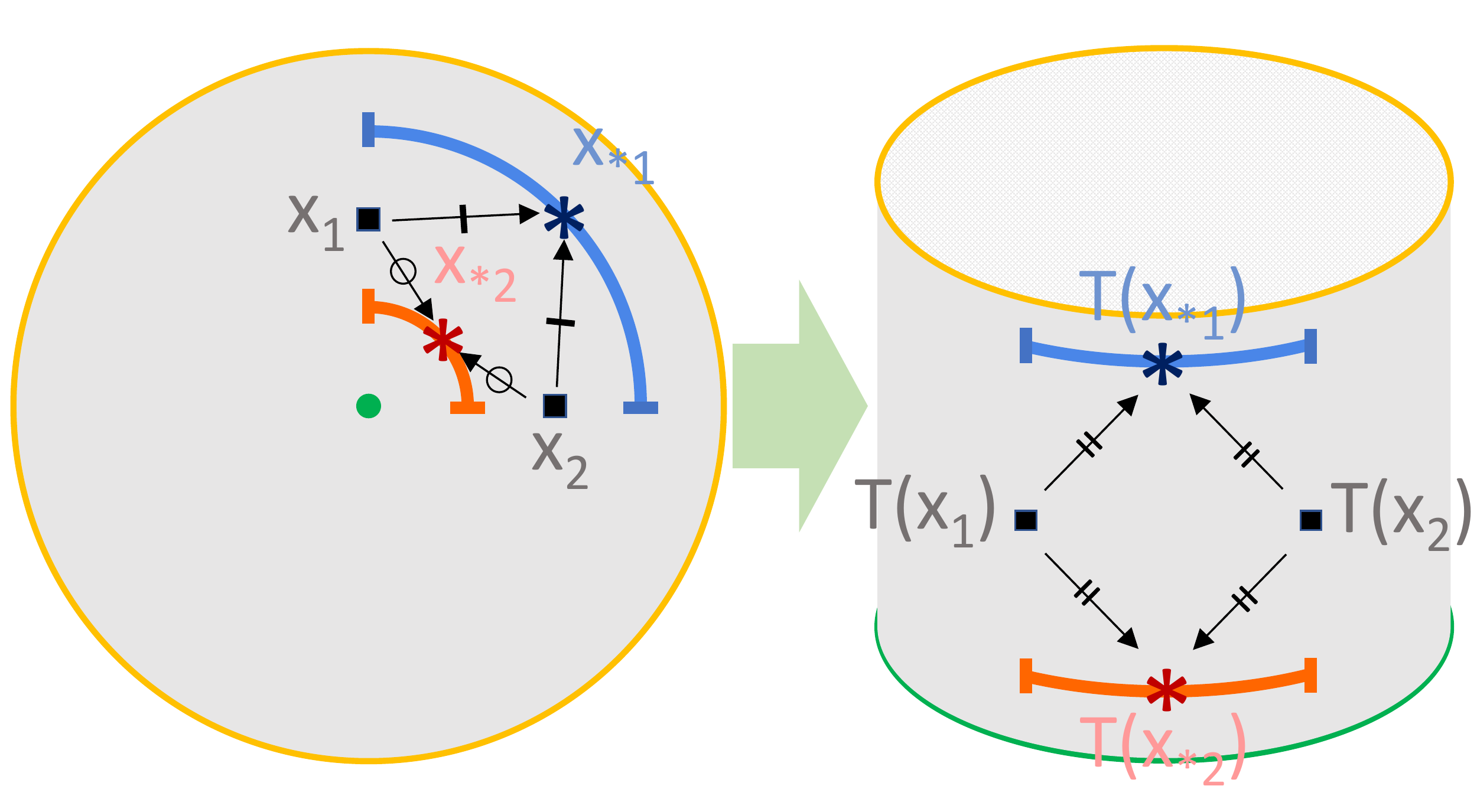}}
\caption{
Many of the problems in Bayesian Optimization relate to the \textit{boundary issue} (too much volume near the boundary~\cite{swersky2017improving}), because of the Euclidean geometry of the search space ball.
Because of the boundary issue, we spend much of the evaluation budget in a particular region of the search space, the boundaries, which contradicts our  \textit{prior assumption} that the solution most likely lies close to the origin. We propose BOCK, whose basic idea is to apply a cylindrical transformation of the search space that expands the volume near the ball center while contracting the volume near the boundaries.
}
\label{figure:transformation}
\end{center}
\vskip -0.2in
\end{figure}

In this paper we, therefore, reformulate Bayesian Optimization in a transformed space, where a ball, $B(\mathbf{x}; R) = \{\mathbf{x} \vert \Vert \mathbf{x} -\mathbf{c} \Vert \le R\}$, is mapped to a cylinder, $C(p,q;\mathbf{c},L = \{(r, \mathbf{a}) \Vert r \in [p, q], \Vert \mathbf{d} - \mathbf{c} \Vert = L \}$ (see Figure \ref{figure:transformation}). In this way, every annulus of width $\delta R$ contains an equal amount of volume for every radius $R$, and the entropic force pulling the optimizer to the boundary disappears. We call our method BOCK, for \textit{Bayesian Optimization with Cylindrical Kernel}. We find that our algorithm is able to successfully handle much higher dimensional problems than standard Bayesian optimizers. As a result, we manage to not only optimize modestly sized neural network layers (up to 500 dimensions in our experiments), obtaining solutions competitive to SGD training, but also hyper-optimize stochastic depth Resnets~\cite{huang2016deep}.

\section{Background}
\subsection{Bayesian Optimization}

\begin{algorithm}[tb]
   \caption{Bayesian Optimization}
   \label{alg:bayesian_optimization_workflow}
\begin{algorithmic}[1]
	\STATE {\bfseries Input:} surrogate model $\mathcal{M}$, acquisition function $\alpha$, search space $X$, initial training data $\mathcal{D}_{init}$, function $f$
   	
    \STATE {\bfseries Output:} optimum $\x_{opt} \in X$ of $f$
   	
    \STATE Initialize $\mathcal{D} = \mathcal{D}_{init}$
   	\WHILE{evaluation budget available}
   	
    \STATE Set $\mu(\cdot \vert \mathcal{D}), \sigma^2(\cdot \vert \mathcal{D}) \leftarrow \mathcal{M} \vert \mathcal{D}$ \textcolor{darkgray}{\small{\textit{$//$ Surrogate function returns predictive mean function and predictive variance function by fitting} $\mathcal{M}$ to $\mathcal{D}$}}
   	
    \STATE Maximize $\widehat{\x} =  \displaystyle\argmax_{\x \in X} \alpha(\mu(\x \vert \mathcal{D}),\sigma^2(\x \vert \mathcal{D}))$\\ \textcolor{darkgray}{\small{\textit{$//$ Acquisition function suggests next evaluation by maximization}}}
   	
    \STATE Evaluate $\hat{y}=f(\widehat{\x})$ \textcolor{darkgray}{\small{\textit{$//$ Evaluate the score of the point selected by the acquisition function}}}
    
   	\STATE Set $\mathcal{D} \leftarrow \mathcal{D} \cup \{(\widehat{\x}, \hat{y})\}$     \textcolor{darkgray}{\small{\textit{$//$ Update the training dataset by including the newly evaluated pair $(\widehat{\x}, \hat{y})$}}}
    
   	\ENDWHILE
\end{algorithmic}
\caption{Bayesian Optimization pipeline.}
\end{algorithm}


Bayesian optimization aims at finding the global optimum of black-box functions, namely
%
\begin{equation}\label{eq:optimization}
	\x_{opt}=\argmin_{\x} f(\x)
\end{equation}
The general pipeline of Bayesian Optimization is given in Alg.~\ref{alg:bayesian_optimization_workflow}.
Prior to starting, a search space must be defined, where the optimum $f(\x_{opt})$ will be searched for.
Given this search space, the initial training dataset must be set, typically by naive guessing where the solution might lie or by informed expert knowledge of the problem. Having completed these two steps, Bayesian Optimization proceeds in an iterative fashion.
At each round, in the absence of any other information regarding the nature of $f(\x)$ a surrogate model attempts to approximate the behavior of $f(\x)$ based on the so far observed points $(\x_i, y_i), y_i=f(\x_i)$.
The surrogate function is then followed by an acquisition function that suggests the next most interesting point $\x_{i+1}$ that should be evaluated.
The pair $(\x_i, y_i)$ is added to the training dataset, $\D=\D\cup(\x_i, y_i)$, and the process repeats until the optimization budget is depleted.

The first design choice of the Bayesian Optimization pipeline is the surrogate model.
The task of the surrogate model is to model probabilistically the behavior of $f(\cdot)$ in the $x$-space in terms of \textit{(a)} a predictive mean $\mu(\x_* \vert \D)$ that approximates the value of $f(x)$ at any point $\x_*$, and \textit{(b)} a predictive variance that represents the uncertainty of the surrogate model in this prediction.
Any model that can provide a predictive mean and variance can be used as a surrogate model, including random forests~\cite{hutter2011sequential}, tree-based models~\cite{bergstra2011algorithms} and neural networks~\cite{snoek2015scalable, springenberg2016bayesian}.
Among other things, Gaussian Processes not only provide enough flexibility it terms of kernel design but also allow for principled and tractable quantification of uncertainty~\cite{rasmussen2006gaussian}. Therefore, we choose Gaussian Processes as our surrogate model.
The predictive mean and the predictive variance of Gaussian processes are given as below
\begin{align}
	\mu(\x_* \vert \D) &= K_{*\D} (K_{\D\D} + \sigma^2 I)^{-1} \mathbf{y} \label{eq:predictive_mean} \\ 
	\sigma^2(\x_* \vert \D) &= K_{**} - K_{*\D} (K_{\D\D} + \sigma_{obs}^2 I)^{-1} K_{\D*}\label{eq:predictive_variance}
\end{align}
where $K_{**}=K(\x_*, \x_*)$, $K_{*\D}$ is a row vector whose \textit{i}th entry is $K(\x_*, \x_i)$, $K_{\D*} = (K_{*\D})^T$, $[K_{\mathcal{DD}}]_{i,j} = K(\x_i, \x_j)$, $\sigma_{obs}^2$ is the variance of observational noise and $\D=\{(\x_i, y_i)\}_i$ is the dataset of observations so far.

The second design choice of the Bayesian Optimization pipeline is the acquisition function.
The predictive mean and the predictive variance from the surrogate model is input to the acquisition function that quantifies the significance of every point in $\x$ as a next evaluation point.
While different acquisition functions have been explored in the literature~\cite{thompson1933likelihood, kushner1964new, movckus1975bayesian, srinivas2009gaussian, hennig2012entropy, hernandez2014predictive}, they all share the following property: they return high scores at regions of either high predictive variance (high but uncertain reward), or low predictive mean (modest but certain reward).

Last, the third design choice of the Bayesian Optimization pipeline, often overlooked, is the search space.
In \cite{snoek2014input} the kernel of the surrogate model is defined on a warped search space, thus allowing for a more flexible modeling of $f(\x)$ by the surrogate function.
As the search space defines where optimal solutions are to be sought for, the search space definition is a means of infusing prior knowledge into the Bayesian Optimization. 
Usually, a search space is set so that the expected optimum is close to the center.

\subsection{High-dimensional Bayesian Optimization}


Even with its successes in many applications, several theoretical as well as practical issues~\cite{shahriari2016taking} still exist when employing Bayesian Optimization to real world problems.
Among others, many Bayesian optimization algorithms are restricted in practice to problems of moderate dimensions. 
In high dimensional problems, one suffers from the curse of dimensionality. 
To overcome the curse of dimensionality, several works make structural assumptions, such as low effective dimensionality~\cite{wang2016bayesian,bergstra2012random} or additive structure~\cite{kandasamy2015high}. 

Because of the way Gaussian Processes quantify uncertainty, the curse of dimensionality is a serious challenge for Gaussian Processes-based Bayesian Optimization in high dimensions. 
Since in high dimensions data points typically lie mostly on the boundary, and anyways far away from each other, the predictive variance tends to be higher in the regions near the boundary.
Thus, the acquisition function is somewhat biased to choose evaluations near the boundary, hence, biasing Bayesian Optimization towards solution near the boundary and away from the center, contradicting with the prior assumption.
This is the \textit{boundary issue}\cite{swersky2017improving}. 

\subsection{Contributions}

Different from the majority of the Bayesian Optimization methods that rely on a Euclidean geometry of the search space implicitly or explicitly\cite{hutter2011sequential,bergstra2011algorithms, snoek2012practical,snoek2014input,snoek2015scalable,swersky2013multi,wang2017batched}, the proposed BOCK applies a cylindrical geometric transformation on it.
The effect is that the volume near the center of the search space is expanded, while the volume near the boundary is shrunk.
Compared to \cite{snoek2014input}, where warping functions were introduced with many kernel parameters to be learned, we do not train transformations.
Also, we avoid learning many additional kernel parameters for better efficiency and scalability.
Because of the transformation, the proposed BOCK solves also the issue of flat optimization surfaces of the acquisition function in high dimensional spaces ~\cite{rana2017high}.
And compared to REMBO~\cite{wang2016bayesian}, BOCK does not rely on assumptions of low dimensionality of the latent search space.
\section{Method}
\subsection{Prior assumption and search space geometry}

The flexibility of a function $f$ on a high-dimensional domain $X$ can be, and usually is, enormous.
To control the flexibility and make the optimization feasible some reasonable assumptions are required.
A standard assumption in Bayesian Optimization is the \textit{prior assumption}~\cite{swersky2017improving}, according to which the optimum of $f(\x)$ should lie somewhere near the center of the search space $X$.
Since the search space is set with the \textit{prior assumption} in mind, it is reasonable for Bayesian Optimization to spend more evaluation budget in areas near the center of $X$. 

It is interesting to study the relation of the \textit{prior assumption} and the geometry of the search space.
The ratio of the volume of two concentric balls $B(\mathbf{c};R - \delta R)$ and $B(\mathbf{c};R)$, with a radius difference of $\delta R$, is
\begin{equation}
	\frac{\text{volume}(B(\mathbf{c};R - \delta R))}{\text{volume}(B(\mathbf{c};R))} = o((1-\delta)^D),
\end{equation}
which rapidly goes to zero with increasing dimensionality $D$.
This means that the volume of $B(\mathbf{c};R)$ is mostly concentrated near the boundary, 
which in combination with Gaussian processes' behavior of high predictive variance at points far from data, creates the \textit{boundary issue}~\cite{swersky2017improving}.
%

It follows, therefore, that with a transformation of the search space we could avoid excessively biasing our search towards large values of $R$. 

\subsection{Cylindrical transformation of search space}

The search space geometry has a direct influence on the kernel $K(\x, \x')$ of the Gaussian Process surrogate model, and, therefore, its predictive variance $\sigma^2(\x)$, see eq.~\eqref{eq:predictive_variance}.
A typical design choice for Gaussian Processes~\cite{snoek2012practical,snoek2014input,gonzalez2016batch} are stationary kernels, $K(\x, \x') \propto f(\x-\x')$.
Unfortunately, stationary kernels are not well equipped to tackle the \textit{boundary issue}.
Specifically, while stationary kernels compute similarities only in terms of relative locations $\x-\x'$, the boundary issue dictates the use of location-aware kernels $K(\x, \x')$ to recognize whether $\x, \x'$ lie near the boundary or the center areas of the search space.

A kernel that can address this should have the following two properties.
First, the kernel must define the similarity between two points $\x, \x'$ in terms of their absolute locations, namely the kernel has to be non-stationary.
Second, the kernel must transform the geometry of its input (\textit{i.e.}, the search space for the Gaussian Process surrogate model) such that regions near the center and the boundaries are equally represented.
To put it otherwise, we need a geometric transformation of the search space that expands the region near the center while contracting the regions near the boundary.
A transformation with these desirable properties is the cylindrical one, separating the radius and angular components of a point $\x$, namely
\begin{align} \label{eq:transformation}
& T(\mathbf{x}) = 
\begin{cases}
	(\Vert \mathbf{x} \Vert_2, \mathbf{x} / \Vert \mathbf{x} \Vert_2) & \text{for } \Vert \mathbf{x} \Vert_2 \neq 0 \\
	(0, \mathbf{a}_{arbitrary}) & \text{for } \Vert \mathbf{x} \Vert_2 = 0 
\end{cases} \\
& T^{-1}(r, \mathbf{a}) = r \mathbf{a} \nonumber
\end{align}
where $\mathbf{a}_{arbitrary}$ is an arbitrarily chosen vector with unit $\mathcal{\ell}_2$-norm
\footnote{Another possible geometric transformation could be from rectangular to spherical coordinates.
Unfortunately, the inverse transformation from spherical to rectangular coordinate entails multiplication of many trigonometric functions, causing numerical instabilities because of large products of small numbers.}.

After applying the geometric transformation we arrive at a new kernel $K_{cyl}(\x_1, \x_2)$, which we will refer to as the cylindrical kernel.
The geodesic similarity measure (kernel) of $K_{cyl}$ on the \textit{transformed cylinder}, $T(X)$, is defined as
\begin{align} \label{eq:kernel}
& K_{cyl}(\x_1, \x_2) \nonumber \\
& = \widetilde{K}(T(\x_1), T(\x_2)) \\
& = K_{r}(r_1, r_2) \cdot K_{a}(\mathbf{a}_1, \mathbf{a}_2) \nonumber,
\end{align}
where the final kernel decomposes into a 1-D radius kernel $K_r$  measuring the similarity of the radii of $r_1, r_2$ and a angle kernel $K_a$.

For the angle kernel $K_{a}(\mathbf{a}_1, \mathbf{a}_2)$, we opt for a continuous radial kernel on the (hyper-)sphere~\cite{jayasumana2014optimizing},
\begin{equation} \label{eq:kernel_angle}
K_{d}(\mathbf{a}_1, \mathbf{a}_2) = \sum_{p=0}^{P} c_p (\mathbf{a}_1^T \mathbf{a}_2)^p, ~~~~c_p\geq 0, ~\forall p
\end{equation}
with trainable kernel parameters of $c_0, \cdots, c_P$ and $P$ user-defined.
The advantages of a continuous radial kernel is two-fold.
First, with increasing $P$ a continuous radial kernel can approximate any continuous positive definite kernel on the sphere with arbitrary precision~\cite{jayasumana2014optimizing}.
Second, the cylindrical kernel has $P + 1$ parameters, which is independent of the dimensionality of $X$.
This means that while the continuous radial kernel retains enough flexibility, only few additional kernel parameters are introduced, which are independent of the dimensionality of the optimization problem and can, thus, easily scale to more than 50 dimensions.
This compares favorably to Bayesian optimization with ARD kernels that introduce at least $d$ kernel parameters for a $d$-dimensional search space.

Although the boundary issue is mitigated by the cylindrical transformation of the search space, the prior assumption (good solutions are expected near the center) can be promoted.
To this end, and to reinforce the near-center expansion of the cylindrical transformation, we consider input warping~\cite{snoek2014input} on the radius kernel $K_{r}(r_1, r_2)$.
Specifically, we use the cumulative distribution function of the Kumaraswamy distribution, $Kuma(r\vert \alpha, \beta) = 1 - (1 - r^{\alpha})^{\beta}$ (with $\alpha>0,\beta>0$),
\begin{align}\label{eq:radius_warping}
	&K_{r}(r_1, r_2)  \nonumber \\
    &= K_{base}(Kuma(r_1\vert \alpha, \beta), Kuma(r_1\vert \alpha, \beta)) \\
    &= K_{base}(1 - (1 - r_1^{\alpha})^{\beta}, 1 - (1 - r_2^{\alpha})^{\beta} \vert \nonumber \alpha, \beta)
\end{align}
where the non-negative $a, b$ are learned together with the kernel parameters.
$K_{base}$ is the base kernel for measuring the radius-based similarity.
Although any kernel is possible for $K_{base}$, in our implementations we opt for the  Matern52 kernel used in Spearmint~\cite{snoek2012practical}.
By making radius warping concave and non-decreasing, $K_r$ and, in turn, $K_{cyl}$ focus more on areas with small radii.

\begin{figure}[t!]
\vskip 0.2in
\begin{center}
\centerline{\includegraphics[width=0.9\columnwidth]{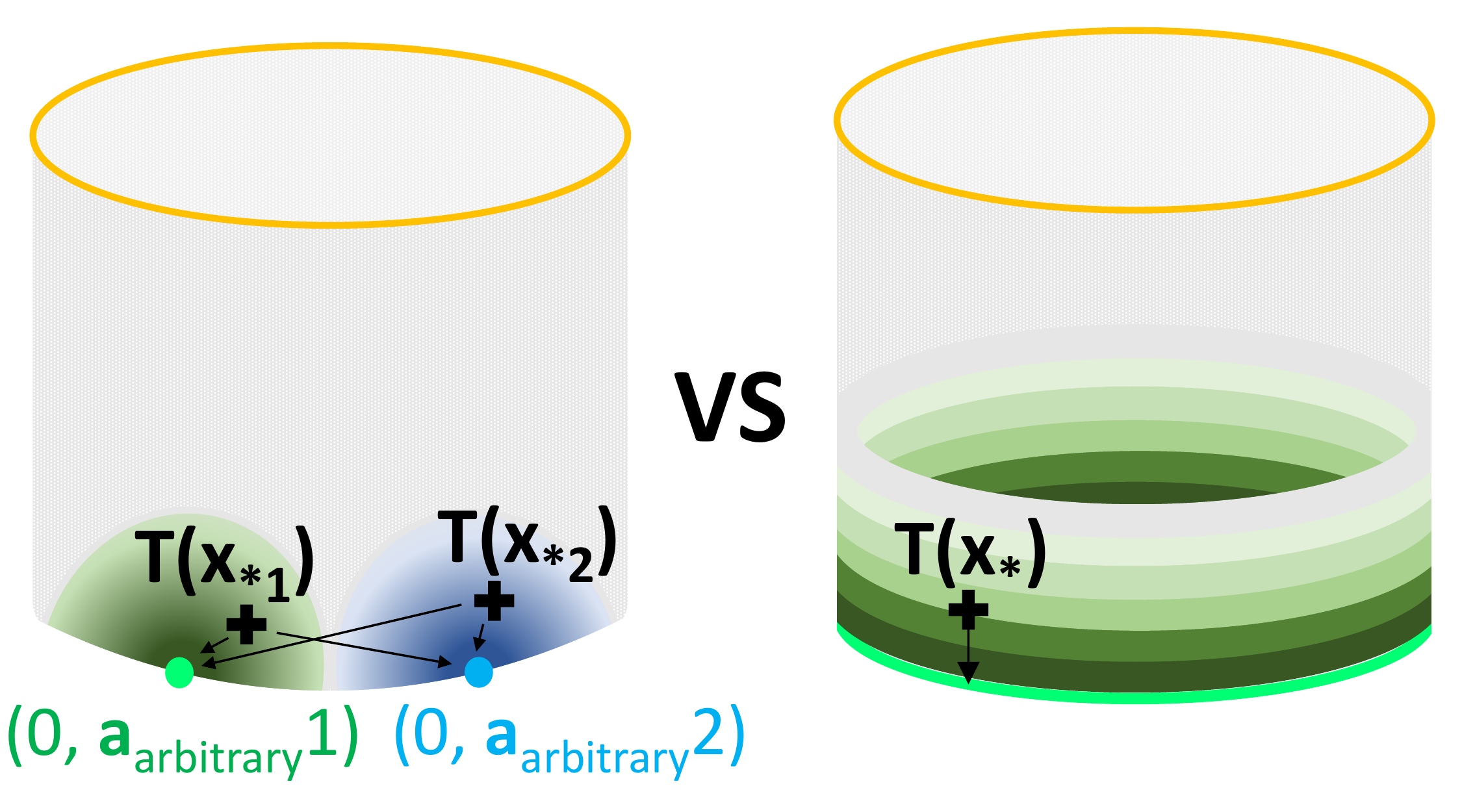}}
\caption{Similarity to the center point in \textit{transformed geometry}.}
\label{figure:special_treatment}
\end{center}
\vskip -0.2in
\end{figure}

Overall, the transformation of the search space has two effects.
The first effect is that the volume is redistributed, such that areas near the center are expanded, while areas near the boundaries are contracted.
Bayesian optimization's attention in the search space, therefore, is also redistributed from the boundaries to the center of the search space.
The second effect is that the kernel similarity changes, such that the predictive variance depends mostly on the angular difference between the existing data points and the ones to be evaluated.
An example is illustrated in Fig.~\ref{figure:transformation}, where our dataset comprises of $\D=\{\x_1, \x_2\}$ and the acquisition function must select between two points, $\x_{*, a}$ and $\x_{*, b}$.
Whereas in the original Euclidean geometry (Figure~\ref{figure:transformation} to the left) $\x_{*, a}$ is further away from $\D$, thus having higher predictive variance, in the cylindrical geometry both $\x_{*, a}$ and $\x_{*, b}$ are equally far, thus reducing the artificial preference to near-boundary points.

\subsection{Balancing center over-expansion}

The transformation $T$ maps an annulus $A(\mathbf{0};R-\delta R, R)$ of width $\delta R$ to the cylinder $C(R-\delta R, R; \mathbf{0}, 1)$, where $(\mathbf{0}, 1)$ is the center and the radius of the cylinder. 
For almost any point in the original ball there is a one-to-one mapping to a point on the cylinder.
The only exception is the extreme case of the ball origin, which is mapped to the 0-width sphere $C(0, 0; 0, 1) = \{(0, \mathbf{a}) \vert \Vert \mathbf{a} \Vert = 1 \}$ on the base of the cylinder (bright green circle in the Figure~\ref{figure:special_treatment} to the right). 
Namely, the center point $\x_{center}$ is overly expanded, corresponding to a set of points.
Because of the one-to-many correspondence between $\x_{center}$ and $C(0, 0; 0, 1)$, an arbitrary point is selected in eq.~\eqref{eq:transformation}. 

Unfortunately, the dependency on a point that is both arbitrary \textit{and} fixed incurs an arbitrary behavior of $K_{cyl}$ as well.
For any point $\x_* \in X \setminus \{\mathbf{0}\}$ the kernel $K_{cyl}(\x_{center}, \x_*)$ changes arbitrarily, depending on the choice of $\mathbf{a}_{arbitrary}$, see Figure~\ref{figure:special_treatment}.
Having a fixed arbitrary point, therefore, is undesirable as it favors points lying closer to it.
To this end, we define $\mathbf{a}_{arbitrary}$ as the angular component of a test point $\x_*$, $\mathbf{a}_{arbitrary} = \x_*/\Vert \x_* \Vert$, thus being not fixed anymore.
Geometrically, this is equivalent to using the point in $C(0, 0; 0, 1)$ closest to $T(\x_*)$, see Figure~\ref{figure:special_treatment} to the right.
This implies that, if the origin is in the dataset, the Gram matrix needed for computing the predictive density now depends on the angular location of the test point under consideration.
This is somewhat unconventional but still well behaved (the kernel is still positive definite and the predictive mean and variance change smoothly).
More details can be found in the supplementary material.



\section{Experiments}
\begin{table*}[t]
\centering
\caption{Bayesian Optimization on four benchmark functions for 20 and 200 dimensions, with the exception of Spearmint+~\cite{snoek2014input} and Elastic BO~\cite{rana2017high} evaluated only on the 20-dimensional cases because of prohibitive execution times).
For benchmark functions with complicated dependencies between variables (repeated Hartmann6, Rosenbrock), BOCK consistently discovers good solutions compared to other baselines, while not being affected by an increasing number of dimensions. 
Also, BOCK matches the accuracies of methods, like Matern, designed to exploit specific geometric structures, \eg, the additive structures of repeated Branin and Levy.
We conclude that BOCK is accurate, especially when we have no knowledge of the geometric landscape of the evaluated functions.
}
\label{tbl:benchmark}
\vskip 0.15in
\begin{center}
\begin{sc}
\begin{adjustbox}{max width=\textwidth}
\begin{threeparttable}
\begin{tabular}{lccccccccr}
\toprule
Benchmark & \multicolumn{2}{c}{Repeated Branin} & \multicolumn{2}{c}{Repeated Hartmann6} & \multicolumn{2}{c}{Rosenbrock} & \multicolumn{2}{c}{Levy} \\
\cmidrule(lr){2-3} \cmidrule(lr){4-5} \cmidrule(lr){6-7} \cmidrule(lr){8-9}  
Dimensions & 20 & 100 & 20 & 100 & 20 & 100 & 20 & 100 \\
\cmidrule(lr){2-3} \cmidrule(lr){4-5} \cmidrule(lr){6-7} \cmidrule(lr){8-9}
Minimum & 0.3979 & 0.3979 & -3.3223 & -3.3223 & 0.0000 & 0.0000 & 0.0000 & 0.0000 \\
\midrule 
SMAC               &  15.95$\pm$3.71&  20.03$\pm$0.85& -1.61$\pm$0.12& -1.16$\pm$0.19&   8579.13$\pm$~~~~  58.45&  8593.09$\pm$~~~~  18.80& 2.35$\pm$0.00& 9.60$\pm$0.04\\
TPE      	      & ~~7.59$\pm$1.20&  23.55$\pm$0.73& -1.74$\pm$0.10& -1.01$\pm$0.10&   8608.36$\pm$~~~~~~~0.00&  8608.36$\pm$~~~~~~~0.00& 2.35$\pm$0.00& 9.62$\pm$0.00\\
Spearmint       & ~~5.07$\pm$3.01& ~~2.78$\pm$1.06& -2.60$\pm$0.42& -2.55$\pm$0.19&   7970.05$\pm$    1276.62&  8608.36$\pm$~~~~~~ 0.00& 1.88$\pm$0.59& 4.87$\pm$0.35\\
Spearmint+& ~~6.83$\pm$0.32&           -& -2.91$\pm$0.25&              -&   5909.63$\pm$    2725.76&                        -& 2.35$\pm$0.00&             -\\
Additive BO\tnote{*}& ~~5.75$\pm$0.93&  14.07$\pm$0.84& -3.03$\pm$0.13& -1.69$\pm$0.22&   3632.25$\pm$    1642.71&  7378.27$\pm$~~   305.24& 2.32$\pm$0.02& 9.59$\pm$0.04\\
Elastic BO          & ~~6.77$\pm$4.85&               -& -2.85$\pm$0.57&              -&   5346.96$\pm$    2494.89&                        -& 1.35$\pm$0.34& - \\
Matern            & ~~0.41$\pm$0.00& ~~0.54$\pm$0.06& -3.29$\pm$0.04& -2.91$\pm$0.26&  ~~230.25$\pm$~~   187.41&  ~~231.42$\pm$~~~~ 28.94& 0.38$\pm$0.13& 2.17$\pm$0.18\\
\midrule
BOCK 	   & ~~0.50$\pm$0.12& ~~1.03$\pm$0.17& -3.30$\pm$0.02& -3.16$\pm$0.10& ~~~~47.87$\pm$~~~~ 33.94&  ~~128.69$\pm$~~~~ 52.84& 0.54$\pm$0.13& 6.78$\pm$2.16\\
\bottomrule
\end{tabular}
\begin{tablenotes}
\item[*] Additive BO~\cite{kandasamy2015high} requires a user-specified ``maximum group size'' to define the additive structure. In each experiment we tried 5 different values and reported the best result.
\end{tablenotes}
\end{threeparttable}
\end{adjustbox}
\end{sc}
\end{center}
\vskip -0.1in
\end{table*}

In Bayesian optimization experiments, we need to define \textit{(a)} how to train the surrogate model, \textit{(b)} how to optimize the acquisition function and \textit{(c)} how to set the search space.
For BOCK we use Gaussian Process surrogate models, where following~\cite{snoek2012practical,snoek2014input} we train parameters of BOCK with MCMC (slice sampling~\cite{murray2010slice,neal2003slice}) . 
For the acquisition function, we use the Adam~\cite{kingma2014adam} optimizer, instead of L-BFGS-B~\cite{zhu1997algorithm}. 
To begin the optimization we feed 20 initial points to Adam.
To select the 20 initial points, a sobol sequence~\cite{bratley1988algorithm} of 20,000 points is generated on the cube (we used the cube for fair comparison with others).
The acquisition function is evaluated on these points and the largest 20 points are chosen as the initial ones.
Instead of using a static sobol sequence in the entire course of Bayesian optimization ~\cite{snoek2012practical,snoek2014input}, we generate different sobol sequences for different evaluations, as fixed grid point impose too strong constraints in high dimensional problems.
In the $d$-dimensional space, our search space is a ball $B(\mathbf{0}, \sqrt{d})$ circumscribing a cube $[-1, 1]^{d}$, which is the scaled and translated version of the typical search region, unit cube $[0, 1]^{d}$.
Our search space is much larger than a cube.
By generating sobol sequence on the cube, the reduction of the \textit{boundary issue} mostly happens at corners of the cube $[-1, 1]^{d}$.

\subsection{Benchmarking}

First, we compare different Bayesian Optimization methods and BOCK on four benchmark functions.
Specifically, following~\cite{eggensperger2013towards,laguna2005experimental} we use the repeated Branin, repeated Hartmann6 and Levy to assess Bayesian Optimization in high dimensions.
To test the ability of Bayesian Optimization methods to optimize functions with more complex structure and stronger intra-class dependencies, we additionally include the Rosenbrock benchmark, typically used as benchmark for gradient-based optimization~\cite{laguna2005experimental}.
The precise formulas for the four benchmark functions are added to the supplementary material.
%
%
We solve the benchmark functions in 20 and 100 dimensions
\footnote{We also solve the 50-dimensional cases. As conclusions are similar, we add these results to the supplementary material.
}
, using 200 and 600 function evaluations respectively for all Bayesian Optimization methods.
We compare the proposed BOCK with the following Bayesian Optimization methods using publicly available software:  SMAC~\cite{hutter2011sequential}, TPE~\cite{bergstra2011algorithms}, Spearmint~\cite{snoek2012practical}, Spearmint+~\cite{snoek2014input}, additive BO~\cite{kandasamy2015high}, elastic BO~\cite{rana2017high}.
We also report an in-house improved Spearmint implementation, which we refer to as Matern.
\footnote{Differences with standard Spearmint: (a) a non-ARD, Matern52 kernel for the surrogate model, (b) dynamic search grid generation per evaluation, (c) Adam~\cite{kingma2014adam} instead of L-BFGS-B~\cite{zhu1997algorithm}, (d) more steps for optimizer.}

We focus on four aspects: \textit{(a)} accuracy, \textit{(b)} efficiency (wall clock time) vs accuracy, \textit{(c)} scalability (number of dimensions) vs efficiency, and \textit{(d)} robustness of BOCK to hyperpararameters and other design choices.
We study (a) in all four benchmark functions.
For brevity, we report (b)-(d) on the Rosenbrock benchmark only, the hardest of the four benchmark functions for all Bayesian Optimization methods in terms of accuracy, and report results the rest of the benchmark functions in the supplementary material.

\begin{figure}[!ht]
	\centering
	\begin{adjustbox}{max width=0.7\linewidth}
  	\includegraphics[width=\linewidth]{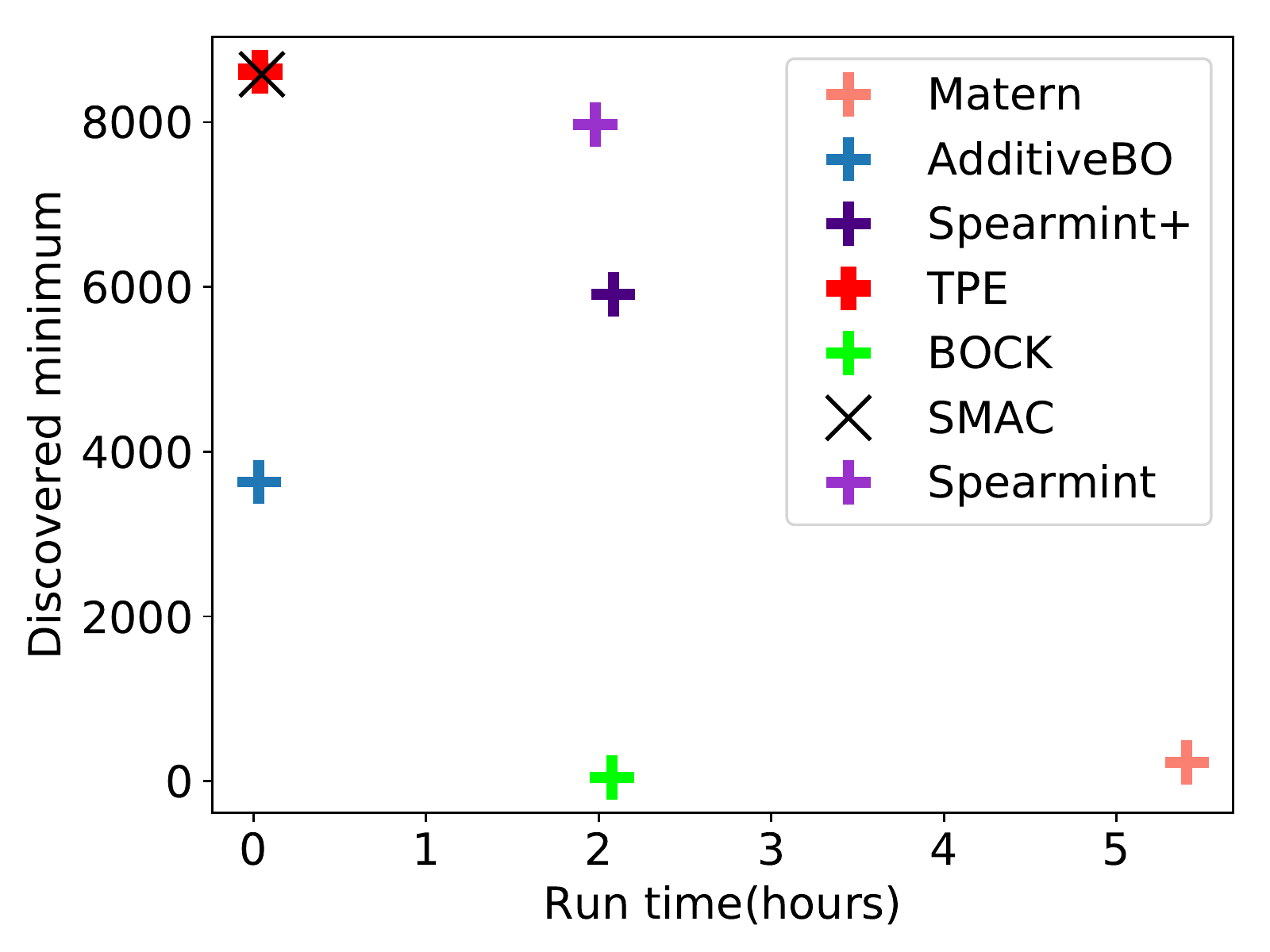}
    \end{adjustbox}
\caption{Accuracy vs wall clock time efficiency for the 20-dimensional Rosenbrock benchmark.
BOCK is the closest to the optimum operating point $(0, 0)$.
Matern is also accurate enough, although considerably slower, while SMAC and additive BO are faster but considerably less accurate.
}
\label{figure:efficiency}
\end{figure}

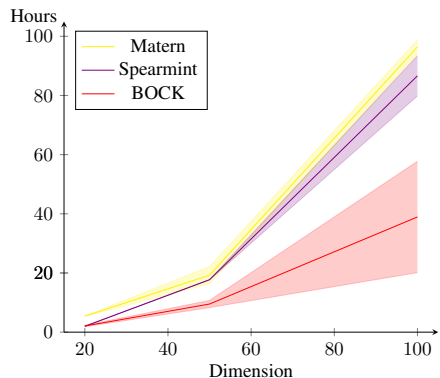
\begin{figure}[!ht]
	\centering
\begin{adjustbox}{max width=0.7\linewidth}
	\begin{tikzpicture}[scale=1.0]
\begin{axis}[
    axis x line=bottom,
	axis y line=left,
    xlabel={Dimension}, 
    xlabel style={at={(0.5, -0.08)}}, 
    ylabel={Hours}, 
	ylabel style={at={(0.0, 1.02)}, rotate=-90}, 
    xmin=15,
	xmax=105,
	ymin=0,
	ymax=105,
	extra y ticks={20},
    legend pos=north west]

\addplot[color=\colormatern, mark=none, name path=cube_mean] table [x=dim, y=mean_hour] {run_time_data/cube.txt};
\addplot[color=\colormatern, mark=none, name path=cube_lower, opacity=0.2, forget plot] table [x=dim, y expr=\thisrow{mean_hour} - \thisrow{std_hour}] {run_time_data/cube.txt};
\addplot[color=\colormatern, mark=none, name path=cube_upper, opacity=0.2, forget plot] table [x=dim, y expr=\thisrow{mean_hour} + \thisrow{std_hour}] {run_time_data/cube.txt};
\addplot[color=\colormatern, fill=\colormatern, fill opacity=0.2, forget plot] fill between[of=cube_lower and cube_upper];
\addlegendentry{Matern}

\addplot[color=\colorspearmint, mark=none, name path=spearmint_mean] table [x=dim, y=mean_hour] {run_time_data/spearmint.txt};
\addplot[color=\colorspearmint, mark=none, name path=spearmint_lower, opacity=0.2, forget plot] table [x=dim, y expr=\thisrow{mean_hour} - \thisrow{std_hour}] {run_time_data/spearmint.txt};
\addplot[color=\colorspearmint, mark=none, name path=spearmint_upper, opacity=0.2, forget plot] table [x=dim, y expr=\thisrow{mean_hour} + \thisrow{std_hour}] {run_time_data/spearmint.txt};
\addplot[color=\colorspearmint, fill=\colorspearmint, fill opacity=0.2, forget plot] fill between[of=spearmint_lower and spearmint_upper];
\addlegendentry{Spearmint}

\addplot[color=\colorspherewarpingorigin, mark=none, name path=spherewarpingorigin_mean] table [x=dim, y=mean_hour] {run_time_data/spherewarpingorigin.txt};
\addplot[color=\colorspherewarpingorigin, mark=none, name path=spherewarpingorigin_lower, opacity=0.2, forget plot] table [x=dim, y expr=\thisrow{mean_hour} - \thisrow{std_hour}] {run_time_data/spherewarpingorigin.txt};
\addplot[color=\colorspherewarpingorigin, mark=none, name path=spherewarpingorigin_upper, opacity=0.2, forget plot] table [x=dim, y expr=\thisrow{mean_hour} + \thisrow{std_hour}] {run_time_data/spherewarpingorigin.txt};
\addplot[color=\colorspherewarpingorigin, fill=\colorspherewarpingorigin, fill opacity=0.2, forget plot] fill between[of=spherewarpingorigin_lower and spherewarpingorigin_upper];
\addlegendentry{BOCK}

\end{axis}
\end{tikzpicture}
\end{adjustbox}
\caption{Wall clock time(hours) on the Rosenbrock benchmark for an increasing the number of dimensions (20, 50 and 100 dimensions, using 200, 400 and 600 function evaluations respectively for all methods).
The solid lines and colored regions represent the mean wall clock time and one standard deviation over these 5 runs.
As obtaining the evaluation score $y=f(\x_*)$ on these benchmark functions is instantaneous, the wall clock time is directly related to the computational efficiency of algorithms.
In this figure, we compare BOCK and BOs with relative high accuracy in all benchmark functions, such as Spearmint and Matern.
BOCK is clearly more efficient, all the while being less affected by the increasing number of dimensions.
}
\label{figure:scalability}
\end{figure}

\noindent\textbf{Accuracy.}
We first present the results regarding the accuracy of BOCK and the Bayesian Optimization baselines in Table~\ref{tbl:benchmark}.
BOCK and Matern outperform others with large margin in discovering near optimal solutions.
For benchmark functions with complicated dependencies between variables, such as the repeated Hartmann6 and Rosenbrock, BOCK consistently discovers smaller values compared to other baselines, while not being affected by an increasing number of dimensions. 
What is more, BOCK is on par even with methods that are designed to exploit the specific geometric structures, if the same geometric structures can be found in the the evaluated functions.
For instance, the repeated Branin and Levy have an additive structure, where the same low dimensional structure is repeated.
The non-ARD kernel of Matern can exploit such special, additive structures.
BOCK is able to reach a similar near-optimum solution without being explicitly designed to exploit such structures.

We conclude that BOCK is accurate, especially when we have no knowledge of the geometric landscape of the evaluated functions.
In the remaining of the experiments we focus on the Bayesian Optimization methods with competitive performance, namely BOCK, Spearmint and Matern.

\noindent\textbf{Efficiency vs accuracy.}
Next, we compare in Figure~\ref{figure:efficiency} the accuracy of the different Bayesian Optimization methods as a function of their wall clock times for the 20-dimensional case for Rosenbrock.
As the function minimum is $f(\x_{opt})=0$, the optimal operating point is at $(0, 0)$.
BOCK is the closest to the optimal point.
Matern is the second most accurate, while being considerably slower to run.
SMAC~\cite{hutter2011sequential} and AdditiveBO~\cite{kandasamy2015high} are faster than BOCK, however, they are also considerably less accurate.

\noindent\textbf{Scalability.}
In Figure~\ref{figure:scalability} we evaluate the most accurate Bayesian Optimization methods from Table~\ref{tbl:benchmark} (Spearmint, Matern and BOCK.) with respect to how scalable they  are, namely measuring the wall clock time for an increasing number of dimensions.
Compared to Spearmint BOCK is less affected by the increasing number of dimensions. 
Not only the BOCK surrogate kernel requires fewer parameters, but also the number of surrogate kernel parameters is independent of the number of input dimensions, thus making the surrogate model fitting faster.
BOCK is also faster than Matern, although the latter uses a non-ARD kernel that is also independent of the number of input dimensions.
Presumably, this is due to a better, or smoother, optimization landscape after the cylindrical transformation of geometry of the input space, affecting positively the search dynamics.
We conclude that BOCK is less affected by the increasing number of dimensions, thus scaling better.

\noindent\textbf{Robustness.}
To study the robustness of BOCK to design choices, we compare three BOCK variants.
The first is the standard BOCK as described in Section 3.
The second variant, BOCK-W, removes the input warping on the radius component.
The third variant, BOCK+B, includes an additional boundary treatment to study whether further reduction of the predictive variance is beneficial.
Specifically, we reduce the predictive variance by adding ``fake'' data. 
%
\footnote{Predictive variance depends only on the inputs $\x$, not the evaluations $y=f(\x)$. 
Thus we can manipulate the predictive variance only with input data. 
BOCK+B uses one additional ``fake data'', which does not have output value(evaluation), in its predictive variance. 
BOCK's predictive variance $\sigma^2(\x_* \vert \D)$ becomes $\sigma^2(\x_* \vert \D \cup \{(R \x_*/\Vert \x_* \Vert, \sim)\})$ in BOCK+B on the search space of the ball $B(\mathbf{0};R)$, where $(R \x_*/\Vert \x_* \Vert , \sim)$ is the fake data.
}
We present results in Table~\ref{tbl:bock_variants}.

\begin{table}[t]
\caption{Comparison between different BOCK variants on Rosenbrock.
Excluding input warping results in slight instabilities, while including additional boundary treatments brings only marginal benefits.}
\label{tbl:bock_variants}
\vskip 0.1in
\begin{center}
\begin{sc}
\begin{adjustbox}{max width=\linewidth}
\begin{tabular}{lccc}
\toprule
Dimensions & 20 & 50 & 100 \\
\midrule
BOCK 	& ~~~~47.87$\pm$~~~~ 33.94 & 29.65$\pm$11.56 &  128.69$\pm$~~52.84 \\
BOCK-W  &   1314.03$\pm$   1619.73 & 51.14$\pm$58.18 &  157.89$\pm$ 161.92 \\
BOCK+B  & ~~~~48.87$\pm$~~~~ 18.33 & 33.90$\pm$21.69 & ~~87.00$\pm$~~36.88 \\
\bottomrule
\end{tabular}
\end{adjustbox}
\end{sc}
\end{center}
\vskip -0.1in
\end{table}

\begin{figure*}[!t]
\minipage{0.32\textwidth}
	\centering
    \textbf{100 dim, $\mathbf{W}_2$ : $10 \times 10$}\par
  	\includegraphics[width=\linewidth]{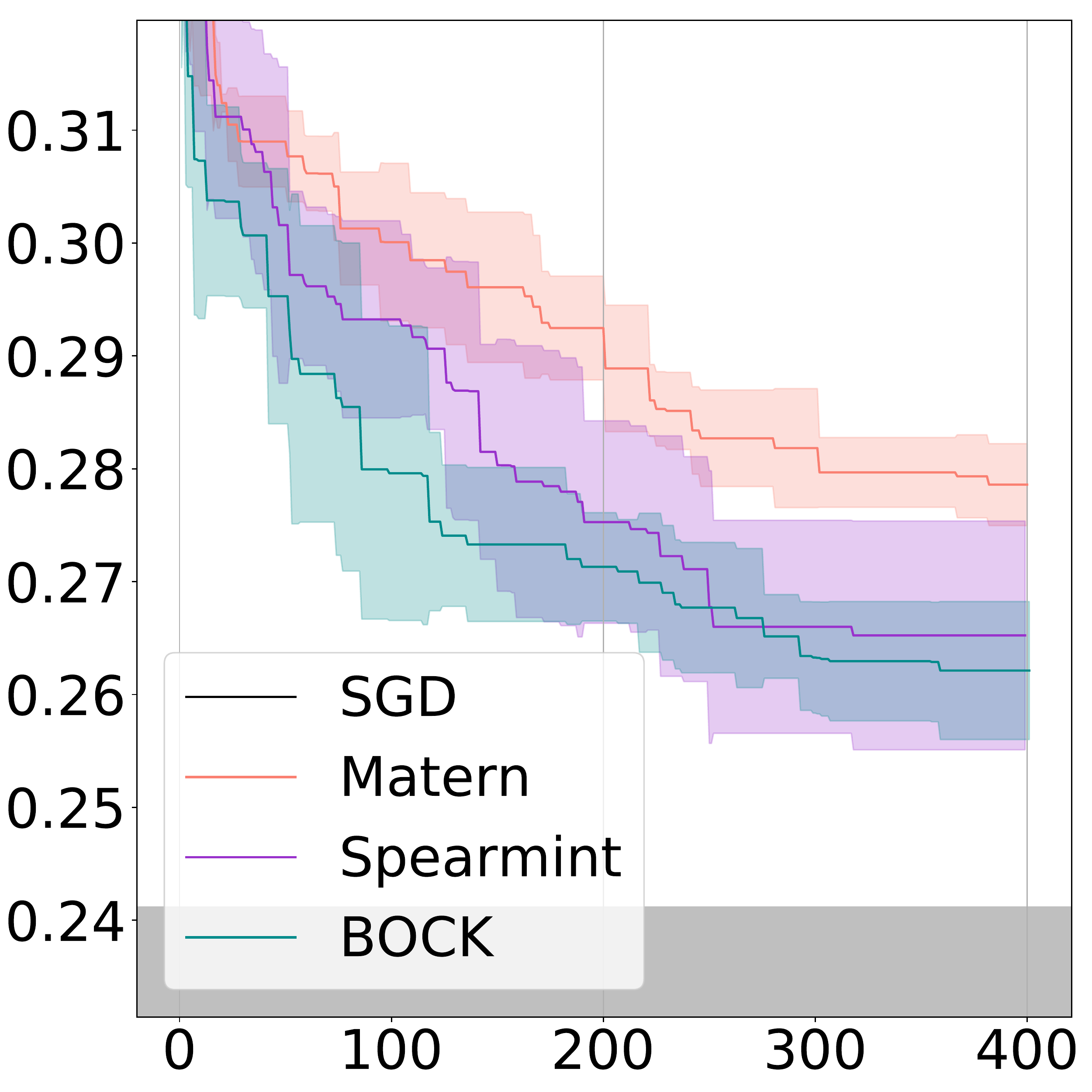}
\endminipage\hfill
\minipage{0.32\textwidth}
\centering
    \textbf{200 dim, $\mathbf{W}_2$ : $20 \times 10$}\par
  	\includegraphics[width=\linewidth]{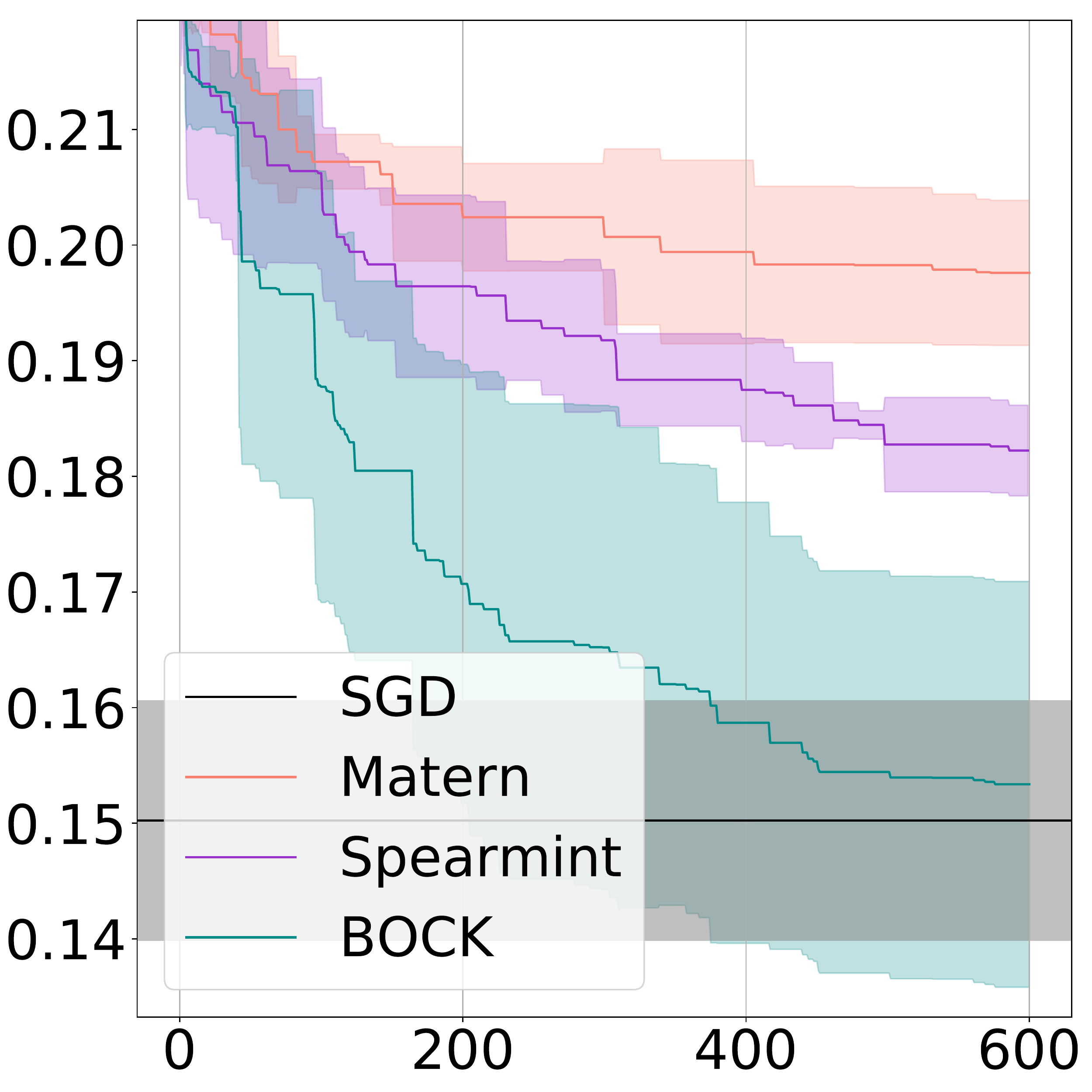}
\endminipage\hfill
\minipage{0.32\textwidth}%
\centering
    \textbf{500 dim, $\mathbf{W}_2$ : $50 \times 10$}\par
  	\includegraphics[width=\linewidth]{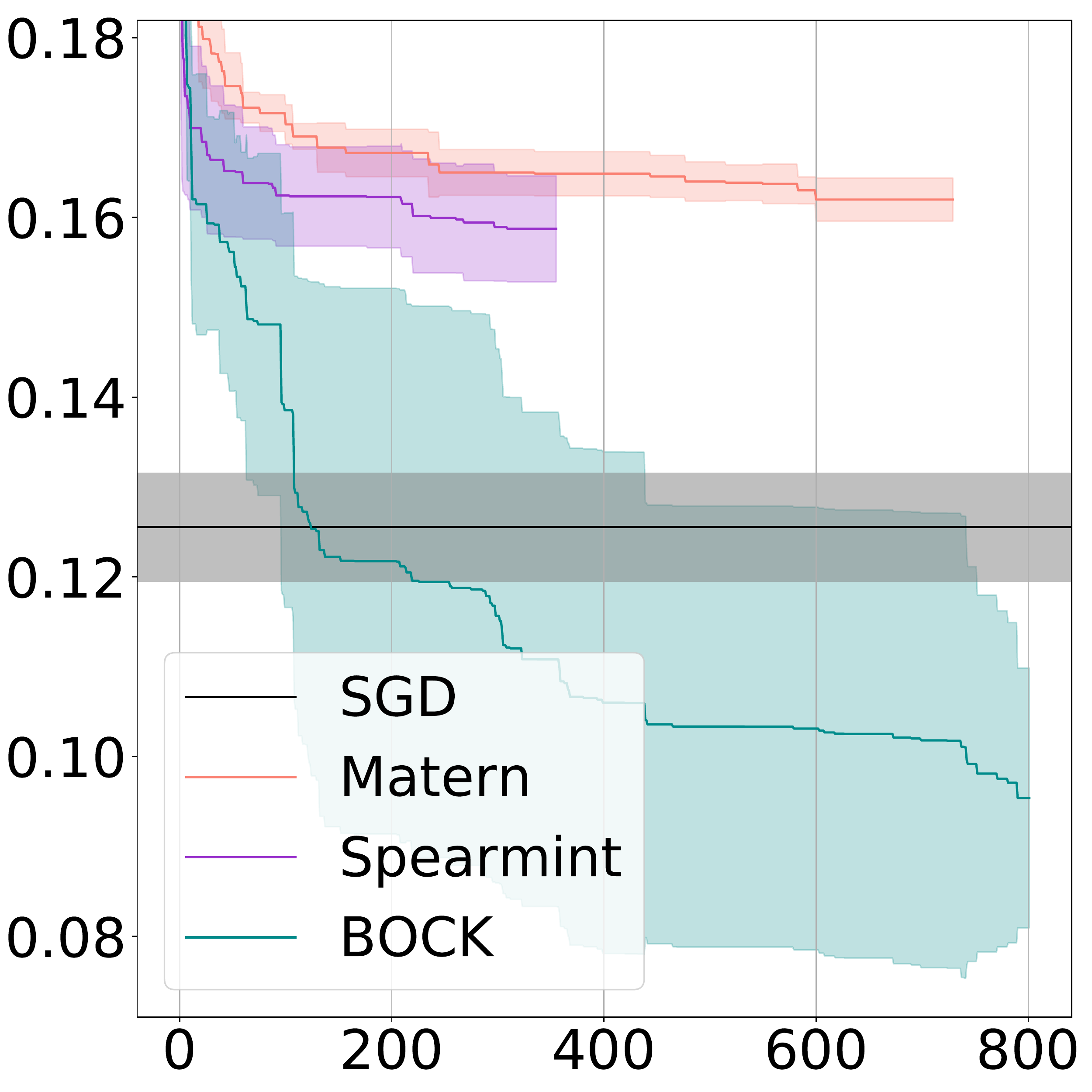}
\endminipage
\vspace{-0.1cm}
\caption{
Training on MNIST a two-layer neural network: $784 \xrightarrow{\mathbf{W}_1, \mathbf{b}_1} N_{hidden} \xrightarrow{\mathbf{W}_2, \mathbf{b}_2} 10$.
For all experiments, $\mathbf{W}_1$, $\mathbf{b}_1$ and $\mathbf{b}_2$ are optimized with Adam~\cite{kingma2014adam} and $\mathbf{W}_2$ with Bayesian Optimization.
In this experiment, Bayesian Optimization repeats the following steps.
\textit{(a)} A new $\mathbf{W}_2$ is suggested by BOCK.
\textit{(b)} Given this $\mathbf{W}_2$, the $\mathbf{W}_1$, $\mathbf{b}_1$, $\mathbf{b}_2$ are fine-tuned by SGD on the training set.
\textit{(c)} The loss on the test is returned as the evaluation on $\mathbf{W}_2$.
Therefore, in this experiment, Bayesian Optimization algorithms directly optimize the test loss.
As an additional baseline we train the neural network only with SGD on the training set and report the loss on the test set.
We report the mean loss on the test set (solid line), as well and $\pm 1 \cdot$ standard deviation over 5 runs (colored area).
We observe that BOCK can optimize successfully a modestly sized neural network layer. BOCK consistently finds a better solution than existing Bayesian optimization algorithms. In high dimensional cases, BOCK outperforms other algorithms with a significant margin.
We conclude that BOCK is capable of optimizing in high-dimensional and complex spaces. 
}
\label{figure:nn_experiment}
\end{figure*}

Removing the input warping on the radius is hurting the robustness, as BOCK-W tends to reach slightly worse minima than BOCK.
However, introducing further boundary treatments has a marginal effect.

Further, we assess the sensitivity of BOCK with respect to the hyperparameter P in eq.\eqref{eq:transformation}.
For $P=3, 5, 7, 9$, we observe that higher P tends to give slightly better minima, while increasing the computational cost.


For clarity of presentation, as well as to maintain the experimental efficiency, in the rest of the experiments we focus on BOCK with $P=3$.

%

\subsection{Optimizing a neural network layer}
\label{sec:mnist}

As BOCK allows for accurate and efficient Bayesian Optimization for high-dimensional problems, we next perform a stress test, attempting to optimize neural network layers of 100, 200 and 500 dimensions.
Specifically, we define a two-layered neural network with architecture: $784 \xrightarrow{\mathbf{W}_1, \mathbf{b}_1} N_{hidden} \xrightarrow{\mathbf{W}_2, \mathbf{b}_2} 10$, using ReLU as the intermediate non-linearity.

In this experiment we are \textit{only} interested in the optimization ability of BOCK of the parameters of a neural network, not in its ability to find solutions that generalize well. Thus, we intentionally follow a procedure that tests if BOCK is able to even overfit to the test set.
Specifically, for all Bayesian optimization experiments $\mathbf{W}_1$, $\mathbf{b}_1$ and $\mathbf{b}_2$ are optimized with Adam~\cite{kingma2014adam} and $\mathbf{W}_2$ with Bayesian Optimization.
The training proceeds as follows.
First, Bayesian Optimization suggests a $\mathbf{W}_2$ based on evaluations on the \textbf{test} set.
Given this $\mathbf{W}_2$ we train on the \textbf{train+validation} sets the $\mathbf{W}_1, b_1, b_2$ with Adam, then repeat.
We show results in Figure~\ref{figure:nn_experiment}, where we report mean and standard deviation over 5 runs for all methods.
We compare BOCK with the competitive Spearmint and Matern.
As baseline, we train a network with Adam~\cite{kingma2014adam} on the training set and report the test loss.
To the best of our knowledge we are the first to apply Gaussian Process-based Bayesian Optimization in so high-dimensional and complex, representation learning spaces.
\footnote{To our knowledge, running Bayesian Optimization on 200 or 500 dimensional problems has only been tried with methods assuming low effective dimensionality~\cite{wang2016bayesian,chen2012joint}}.

We observe that BOCK clearly outperforms Spearmint and Matern, with the gap increasing for higher $\mathbf{W}_2$ dimensions.
What is more surprising, however, is that BOCK is able to match and even outperform the Adam-based SGD in the 200 and 500-dimensional experiments for all 5 runs.
There are two reasons for this.
First, in this experiment, all Bayesian optimization algorithms directly optimize the test loss.
Second, in its sophistication Adam~\cite{kingma2014adam} probably overfits to the training set.

In the end, the final neural network is obviously \textbf{not optimal} in terms of generalization, as to optimize $\mathbf{W}_2$ BOCK has access to the test set.
However, even the fact that it is possible to optimize such high-dimensional and complex (representation learning) functions with Bayesian Optimization is noteworthy.
We conclude that BOCK is able to optimize complex, multiple-optima functions, such as neural network layers.

\subsection{Hyper-optimizing stochastic depth ResNets}

As BOCK allows for accurate and efficient Bayesian Optimization, in our last experiment we turn our attention to a practical hyperparameter optimization application.
Stochastic Depth ResNet (SDResNet)~\cite{huang2016deep} was shown to obtain better accuracy and faster training by introducing a stochastic mechanism that randomly suppresses ResNet blocks (ResBlock)~\cite{he2016deep}.
The stochastic mechanism for dropping ResBlocks is controlled by a vector $p\in[0, 1]^t$ of probabilities for $t$ ResBlocks, called ``death rate''.
In~\cite{huang2016deep} a linearly increasing (from input to output) death rate was shown to improve accuracies.

Instead of pre-defined death rates, we employ BOCK to find the optimal death date vector for SDRes-110 on CIFAR100~\cite{krizhevsky2009learning}.
We first train an SD-ResNet for 250 epochs and linear death rates with exactly the same configuration in~\cite{huang2016deep} up to 250 epochs.
In this experiment BOCK has access to the \textbf{training and validation set only}.
Then, per iteration BOCK first proposes the next candidate $p$ based on evaluation on the validation set.
Given the candidate $p$ we run 100 epochs of SGD on the training set and repeat with an annealed learning rate (0.01 for 50 epochs, then 0.001 for 50 more).
We initialize the death rate vector to $p=[0.5, 0.5, ..., 0.5]$.
We report the final accuracies computed in the \textbf{unseen test set} in Table~\ref{tbl:stochastic-resnet}, using only 50 evaluations.

We observe that BOCK learns a $p$-value that results in an improved validation accuracy compared to SDResNet, all the while allowing for a lower expected depth.
The improved validation accuracy materializes to an only slightly better test accuracy, however.
One reason is that optimization is not directly equivalent to learning, as also explained in Section~\ref{sec:mnist}.
What is more, it is likely that the accuracy of SDResNet-110 on CIFAR-100 is maxed out, especially considering that only 50 evaluations were made.
We conclude that BOCK allows for successful and efficient Bayesian Optimization even for practical, large-scale learning problems.

\begin{table}[t]
\vspace{-0.5cm}
\caption{Using BOCK to optimize the ``death rates'' of a Stochastic Depth ResNet-110, we improve slightly the accuracy on CIFAR100 while reducing the expected depth of the network.}
\vspace{-0.4cm}
\label{tbl:stochastic-resnet}
\vskip 0.15in
\begin{center}
\begin{sc}
\begin{adjustbox}{max width=\linewidth}
\begin{tabular}{lcccr}
\toprule
Method & Test Acc. & Val. Acc. & Exp. Depth \\
\midrule
ResNet-110          & 72.98$\pm0.43$& 73.03$\pm$0.36& 110.00\\
SDResNet-110+Linear & 74.90$\pm$0.15& 75.06$\pm$0.04& ~~82.50\\
SDResNet-110+BOCK   & 75.06$\pm$0.19& 75.21$\pm$0.05& 74.51$\pm$1.22\\
\bottomrule
\end{tabular}
\end{adjustbox}
\end{sc}
\end{center}
\vskip -0.1in
\vspace{-0.6cm}
\end{table}

\section{Conclusion}
We propose BOCK, Bayesian Optimization with Cylindrical Kernels.
Many of the problems in Bayesian Optimization relate to the \textit{boundary issue} (too much value near the boundary), and the \textit{prior assumption} (optimal solution probably near the center).
Because of the boundary issue, not only much of the evaluation budget is unevenly spent to the boundaries, but also the \textit{prior assumption} is violated.
The basic idea behind BOCK is to transform the ball geometry of the search space with a cylindrical transformation, expanding the volume near the center while contracting it near the boundaries.
As such, the Bayesian optimization focuses less on the boundaries and more on the center.

We test BOCK extensively in various settings.
On standard benchmark functions BOCK is not only more accurate, but also more efficient and scalable compared to state-of-the-art Bayesian Optimization alternatives.
Surprisingly, optimizing a neural network (on the test set) up to 500 dimensions with BOCK allows for even better (albeit overfitting) parameters than SGD with Adam~\cite{kingma2014adam}.
And hyper-optimizing the ``death rate'' of stochastic depth ResNet~\cite{huang2016deep} results in smaller ResNets while maintaining accuracy.

We conclude that BOCK allows for accurate, efficient and scalable Gaussian Process-based Bayesian Optimization.  We plan to make the code public upon acceptance.


\bibliography{bayesian_optimization}
\bibliographystyle{icml2018}

\newpage
\onecolumn
\setcounter{page}{1}
\setcounter{section}{0}
\icmltitlerunning{BOCK : Bayesian Optimization with Cylindrical Kernels - Supplementary}
\icmltitle{BOCK : Bayesian Optimization with Cylindrical Kernels\\Supplementary Materials}





\icmlaffiliation{equal}{QUvA Lab, Informatic Institute, University of Amsterdam, Amsterdam, Netherlands}
\icmlaffiliation{cifar}{Canadian Institute for Advanced Research, Toronto, Canada}

\icmlcorrespondingauthor{ChangYong Oh}{changyong.oh0224@gmail.com}
\icmlcorrespondingauthor{Efstratios Gavves}{efstratios.gavves@gmail.com}
\icmlcorrespondingauthor{Max Welling}{m.welling@uva.nl}

\icmlkeywords{Bayesian Optimization, Hyperparameter Optimization, Gaussian Process, Kernel, Geometry, Warping, Radial kernel, High dimensional spaces, Change of coordinates, Transformation, Resolution, Boundary issue}

\vskip 0.3in




\section{Special Treatment of the center point}

In Section3.3, we propose the special treatment on the center point to correct the problem resulting from over-expansion of the center point.
We provide a justification for the positive semi-definiteness of $K_{cyl}$.

Since the cylindrical kernel $K_{cyl}$ is a tensor product of the kernel $K_r$ from the radius component, and the kernel $K_a$ from the angular component, if we can show that both $K_r$ and $K_d$ are proper kernels (\ie, positive semi-definite), then we can conclude that $K_{cyl}$ is also a proper kernel~\cite{rasmussen2006gaussian}. 

Let us denote with $T : B(\mathbf{0},R) \rightarrow C(0, R;\mathbf{0}, 1)$ the transformation from a ball to a cylinder, and with $\pi_a$ the projection to angle component in a cylinder.
For a given set $\widetilde{\D} = \D \cup \{\mathbf{0}\}$, we denote the angle component $\pi_a(T(\D))$ as $\D_a$. Then the gram matrix of $K_a$ on $\widetilde{\D}$ can be represented by
\begin{equation}
	\begin{bmatrix}
		K_a(\D_a,\D_a) & K_a(\D_a,\aaa_{arbitrary}) \\
		K_a(\aaa_{arbitrary}, \D_a) & K_a(\aaa_{arbitrary}, \aaa_{arbitrary})
	\end{bmatrix}
\end{equation}
In the special treatment, we set $\aaa_{arbitrary} = \aaa* = \x_* / \Vert \x_* \Vert$.
This is nothing but the gram matrix of $K_a$ on the dataset ${\aaa_1, \aaa_2, \cdots, \aaa_N, \aaa_*}$
As long as, $K_a$ is proper kernel, using the special treatment does break the positive semi-definiteness of kernel.

The special treatment assumes $\x_* \ne \mathbf{0}$. 
A single point is of measure zero under any non-atomic measure. 
This assumption can be safely made, theoretically. 
In our experiments, we start with data including $\mathbf{0}$ as an initial data point, thus the acquisition function does not need to go over $\x_*=0$ anymore.

Interestingly, this special treatment bears similarity to Bayesian Optimization using treed Gaussian Processes~\cite{assael2014heteroscedastic}. 
When there is $\mathbf{0}$ in our training data set, at each prediction, we have a  Gaussian Process on the same data set but one point.
Namely, one can view this as having different Gaussian Processes at different prediction points, in the sense that the data conditioning the Gaussian Process change (not the kernel parameters).
As the treed Bayesian Optimization guarantees continuity between the regions having the different Gaussian Processes is also, the cylindrical kernel with the special treatment also has continuity since the Gram matrix is a continuous function of $\dd_arbitrary$.

However, at different prediction points we have different gram matrices.
Hence, a naive implementation of the above idea makes the maximization of the acquisition function infeasible.
In Gaussian process prediction, main computation bottle is to calculate a quadratic form as below
\begin{equation} \label{eq:quadratic}
	\begin{bmatrix}
    	\mathbf{p}^T & p_0
    \end{bmatrix}
    \Bigg(
    \begin{bmatrix}
		K_{cyl}(\D,\D) & K_{cyl}(\D,0) \\
		K_{cyl}(0, \D) & K_{cyl}(0, 0)
	\end{bmatrix}
    + \sigma^2_{obs} I \Bigg)^{-1}
    \begin{bmatrix}
    	\mathbf{q}^T \\
        q_0
    \end{bmatrix}
\end{equation}
Fortunately, we can calculate the quadratic form eq~\eqref{eq:quadratic} efficiently by using block matrix inversion. Once we calculate $K_{cyl}(\D,\D)^{-1}$, by using pre-calculated, $K_{cyl}(\D,\D)^{-1}$, calculating eq~\eqref{eq:quadratic} for different $\x_*$ requires marginal computation.

\subsection{Positive semi-definiteness of cylindrical kernels}
\begin{theorem}
If $K_a(\mathbf{a}, \mathbf{a}) = \eta > 0$, $\forall \mathbf{a} \in \mathcal{S}^{d-1}$, then cylindrical kernels are positive semi-definite with the special treatment of the centre point.
\end{theorem}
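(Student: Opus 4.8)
The plan is to reduce the positive semi-definiteness of $K_{cyl}$ under the special treatment to that of the tensor-product kernel $\widetilde{K} = K_r \otimes K_a$ living on the cylinder, by noticing that positive semi-definiteness is only ever needed for the Gram matrices that actually occur in Gaussian Process prediction: those over $\D \cup \{\x_*\}$ for a finite dataset $\D$ and a \emph{single} test point $\x_* \neq \mathbf{0}$ (recall from Section~3.3 that $\x_* = \mathbf{0}$ is excluded and of measure zero, and that $\mathbf{0}$ occurs at most once in $\D$). So it suffices to show that every such matrix is positive semi-definite.

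First I would fix the test point $\x_* \neq \mathbf{0}$ and set $\aaa_{arbitrary} = \aaa_* = \x_* / \Vert \x_* \Vert$ as prescribed by the special treatment, extending $T$ to the origin by $T(\mathbf{0}) := (0, \aaa_*)$. With this \emph{one} fixed choice, $T$ is a genuine single-valued map from $\D \cup \{\x_*\}$ into the cylinder, and by the definition in eq.~\eqref{eq:kernel} we have $K_{cyl}(\x, \x') = \widetilde{K}(T(\x), T(\x'))$ for every pair in this set, where $\widetilde{K}((r, \aaa), (r', \aaa')) = K_r(r, r')\, K_a(\aaa, \aaa')$. Since the pullback of a positive semi-definite kernel along an arbitrary map is again positive semi-definite, the Gram matrix of $K_{cyl}$ on $\D \cup \{\x_*\}$ inherits positive semi-definiteness from $\widetilde{K}$; equivalently, as the paper's sketch puts it, that matrix is literally the Gram matrix of $K_a$ on $\{\aaa_1, \dots, \aaa_N, \aaa_*\}$ scaled entrywise by the (positive semi-definite) Gram matrix of $K_r$ on the radii.

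It then remains to argue that $\widetilde{K}$ is positive semi-definite. By the Schur product theorem a pointwise product of positive semi-definite kernels is positive semi-definite, so it is enough that $K_r$ and $K_a$ each are. The angle kernel $K_a$ is positive semi-definite by hypothesis; concretely, in the radial form of eq.~\eqref{eq:kernel_angle} each summand $(\aaa_1^T \aaa_2)^p$ is a Hadamard power of the positive semi-definite linear kernel and $c_p \ge 0$, so the sum is positive semi-definite, and on the diagonal $(\aaa^T\aaa)^p = 1$ gives $K_a(\aaa,\aaa) = \sum_p c_p = \eta$. The radius kernel $K_r$ is positive semi-definite because, by eq.~\eqref{eq:radius_warping}, it is the Matern52 base kernel precomposed with the elementwise Kumaraswamy-CDF warping, and precomposition with a map preserves positive semi-definiteness. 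Finally, the hypothesis $K_a(\aaa, \aaa) = \eta > 0$ is exactly what makes the construction coherent across test points: it forces $K_{cyl}(\mathbf{0}, \mathbf{0}) = K_r(0, 0)\,\eta$ independently of which $\aaa_*$ was substituted, so the diagonal entry attached to the origin — the prior variance there — is unambiguous and strictly positive, and the family of prediction Gram matrices genuinely comes from one coherent kernel rather than merely being individually positive semi-definite.

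The only real obstacle I anticipate is the worry that the ``moving'' choice of $\aaa_{arbitrary}$ introduces an internal inconsistency within a single Gram matrix. The resolution I would stress is that within one prediction there is exactly one substituted angle $\aaa_*$ in play — the origin (if present) and the test point both carry it — so $T$ really is a function on the finite point set and the pullback/Schur-product argument applies verbatim; the apparent difficulty surfaces only when one compares Gram matrices for \emph{different} test points, which is not what positive semi-definiteness demands, and which is precisely the issue the $K_a(\aaa,\aaa)=\eta$ hypothesis neutralizes.
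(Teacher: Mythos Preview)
Your argument is correct, but it takes a different route from the paper's formal proof. The paper writes the angular Gram matrix on $\D_a$ in block form---with the origin contributing an $\eta\mathbf{1}$ off-diagonal block and an $\eta$ diagonal entry, which is where the hypothesis $K_a(\aaa,\aaa)=\eta$ enters---and then argues positive semi-definiteness by differentiating the quadratic form $\mathbf{c}^T K_{>0,a}\mathbf{c} + 2\eta c\,\mathbf{1}^T\mathbf{c} + \eta c^2$, solving for its stationary points, and verifying that the value there is zero. You instead fix a single substituted angle $\aaa_*$ for the origin, so that on $\D\cup\{\x_*\}$ the map $T$ is genuinely single-valued, and observe that $K_{cyl}$ is the pullback of the tensor product $K_r\otimes K_a$; PSD then follows from the Schur product theorem together with the individual PSD of $K_r$ (Mat\'ern precomposed with a warping) and $K_a$ (nonnegative combination of Hadamard powers of the linear kernel). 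Your route is the more structural one and in fact matches the one-sentence sketch the paper gives in the supplementary paragraph just \emph{before} the theorem (``this is nothing but the gram matrix of $K_a$ on $\{\aaa_1,\dots,\aaa_N,\aaa_*\}$''); it also sidesteps the delicate step of justifying that a stationary point of a quadratic form is actually a global minimum. The two arguments deploy the hypothesis at different moments: the paper uses $K_a(\aaa,\aaa)=\eta$ to build the $\eta\mathbf{1}$ block, whereas you use it only to make $K_{cyl}(\mathbf{0},\mathbf{0})$ unambiguous across test points.
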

\begin{proof}
We need to show that $\forall n \in \mathbf{N}$, $\forall \mathbf{a}_i \in \mathbf{R}^n$, $\mathbf{c} \in \mathbf{R}^n$
\begin{equation}
    \mathbf{c}^T K_a(\D_a, \D_a) \mathbf{c} \ge 0
\end{equation}
where $\D_a = \pi_a(T(\D))$ and $\D = \{\mathbf{a}_i\}_{i=1,\cdots,n}$

Case 1. $\mathbf{0} \notin \D$.
In this case, this is positive definite as shown in~\cite{jayasumana2014optimizing}.

Case 2. $\mathbf{0} \in \D$.
Let us denote $\D = \D_{>0} \cup \{ \mathbf{0}_a \}$, where $\mathbf{0}_a = \pi_a(T(\mathbf{0}))$ and $[\mathbf{c}^T \quad c]^T \in \mathbf{R}^n$.
Then we need to show following
\begin{equation}\label{eq:pos_def_quad}
    \begin{bmatrix}
        \mathbf{c}^T & c
    \end{bmatrix}
    \begin{bmatrix}
        K(\D_{>0,a}, \D_{>0,a}) & \eta \textbf{1} \\
        \eta \textbf{1}^T & \eta
    \end{bmatrix}
    \begin{bmatrix}
        \mathbf{c} \\
        c
    \end{bmatrix}
    = \mathbf{c}^T K_{>0, a} \mathbf{c} - 2 \eta c \textbf{1}^T \mathbf{c} + \eta c^2 \ge 0
\end{equation}
where $\D_{>0,a} = \pi_a(\D_{>0})$, $\textbf{1} \in \mathbf{R}^{n-1}$ and $K_{>0, a} = K(\D_{>0,a}, \D_{>0,a})$.

After differentiation, we have
\begin{align}
    \frac{\partial}{\partial \mathbf{c}} \mathbf{c}^T K_{>0, a} \mathbf{c} - 2 \eta c \textbf{1}^T \mathbf{c} + \eta c^2 &= 2 K_{>0, a} \mathbf{c} - 2 \eta c \textbf{1} \\
    \frac{\partial}{\partial c} \mathbf{c}^T K_{>0, a} \mathbf{c} - 2 \eta c \textbf{1}^T \mathbf{c} + \eta c^2 &= -2 \eta \textbf{1}^T \mathbf{c} + 2 \eta c
\end{align}
Thus the minimum of the quadratic form eq.\ref{eq:pos_def_quad} satisfies
\begin{align}
    K_{>0, a} \mathbf{c} &= \eta c \textbf{1} \\
    \eta \textbf{1}^T \mathbf{c} &= \eta c
\end{align}
By substituting this into eq.\ref{eq:pos_def_quad}, we get
\begin{equation}
    \eta \mathbf{c}^T \textbf{1} \textbf{1}^T \mathbf{c} - 2 \eta \mathbf{c}^T \textbf{1} \textbf{1}^T \mathbf{c} + \eta (\textbf{1}^T \mathbf{c})^2 = \eta \cdot 0 \ge 0
\end{equation}
The minimum of eq.\ref{eq:pos_def_quad} is zero.
We have shown that the special treatment of the center point guarantees that cylindrical kernels are positive-semidefinite.

\end{proof}

\section{Implementation Detail}
Parts of implementation details is provided in Section 4 except for the prior distribution we use for radius kernel warping eq~8. In order to make BOCK focus more on the center, we make prior concave and non-decreasing by using spike and slab prior~\cite{ishwaran2005spike}.
In eq~8, $\log(\alpha)$ has spike and slab prior on positive real line.
$\log(\beta)$ has spike and slab prior on negative real line.

\section{Benchmark functions}

The suggested search space for below benchmark functions are adjusted to be $[-1, 1]^D$ in our experiments.

\subsection{Repeated Branin}
\begin{equation}
	f_{rep-branin}(x_1, x_2, \cdots, x_D) = 1/\lfloor \frac{D}{2} \rfloor \sum_{i=1}^{\lfloor D/2 \rfloor} f_{branin}(x_{2i-1}, x_{2i})
\end{equation}
where $f_{branin}$ is branin function whose formula can be found in ~\cite{laguna2005experimental}.
The original search space of branin function is $[-5,10] \times [0,15]$

\subsection{Repeated Hartmann6}
\begin{equation}
	f_{rep-hartmann6}(x_1, x_2, \cdots, x_D) = 1/\lfloor \frac{D}{6} \rfloor \sum_{i=1}^{\lfloor D/6 \rfloor} f_{hartmann6}(x_{6i-5}, x_{6i-4}, x_{6i-3}, x_{6i-2}, x_{6i-1}, x_{6i})
\end{equation}
where $f_{hartmann6}$ is hartmann6 function whose formula can be found in ~\cite{laguna2005experimental}.
The original search space of hartmann6 function is $[0,1]^6$

\subsection{Rosenbrock~\cite{laguna2005experimental}}
\begin{equation}
	f_{rosenbrock}(x_1, x_2, \cdots, x_D) = \sum_{i=1}^{D-1} \big[ 100(x_{i+1} - x_i^2)^2 + (x_i-1)^2 \big]
\end{equation}
The original search space is $[-5, 10]^D$

\subsection{Levy~\cite{laguna2005experimental}}
\begin{gather}
	f_{levy}(x_1, x_2, \cdots, x_D) = \sin^2(\pi w_1) \sum_{i=1}^{D-1} (w_i-1)^2 \big[ 1+ 100 \sin^2 (\pi w_i + 1) \big] + (w_D-1)^2\big[1+ \sin^2(2 \pi w_D)\\
    w_i = 1 + \frac{x_i - 1}{4} \nonumber
\end{gather}
The original search space is $[-10, 10]^D$

\section{Efficiency vs accuracy}

We conduct the same analysis for efficiency vs accuracy with other benchmark functions on 20 dimensional case. 
In all cases, BOCK is the closest to the optimum operating point (0, 0)~\ref{figure:runtime_vs_mininum}.
Matern is also accurate enough, although considerably slower,
while SMAC and additive BO are faster but considerably less
accurate. 

\begin{figure}[!htb]
\minipage{0.33\textwidth}
	\centering
    \textbf{Repeated Branin}\par
  	\includegraphics[width=\linewidth]{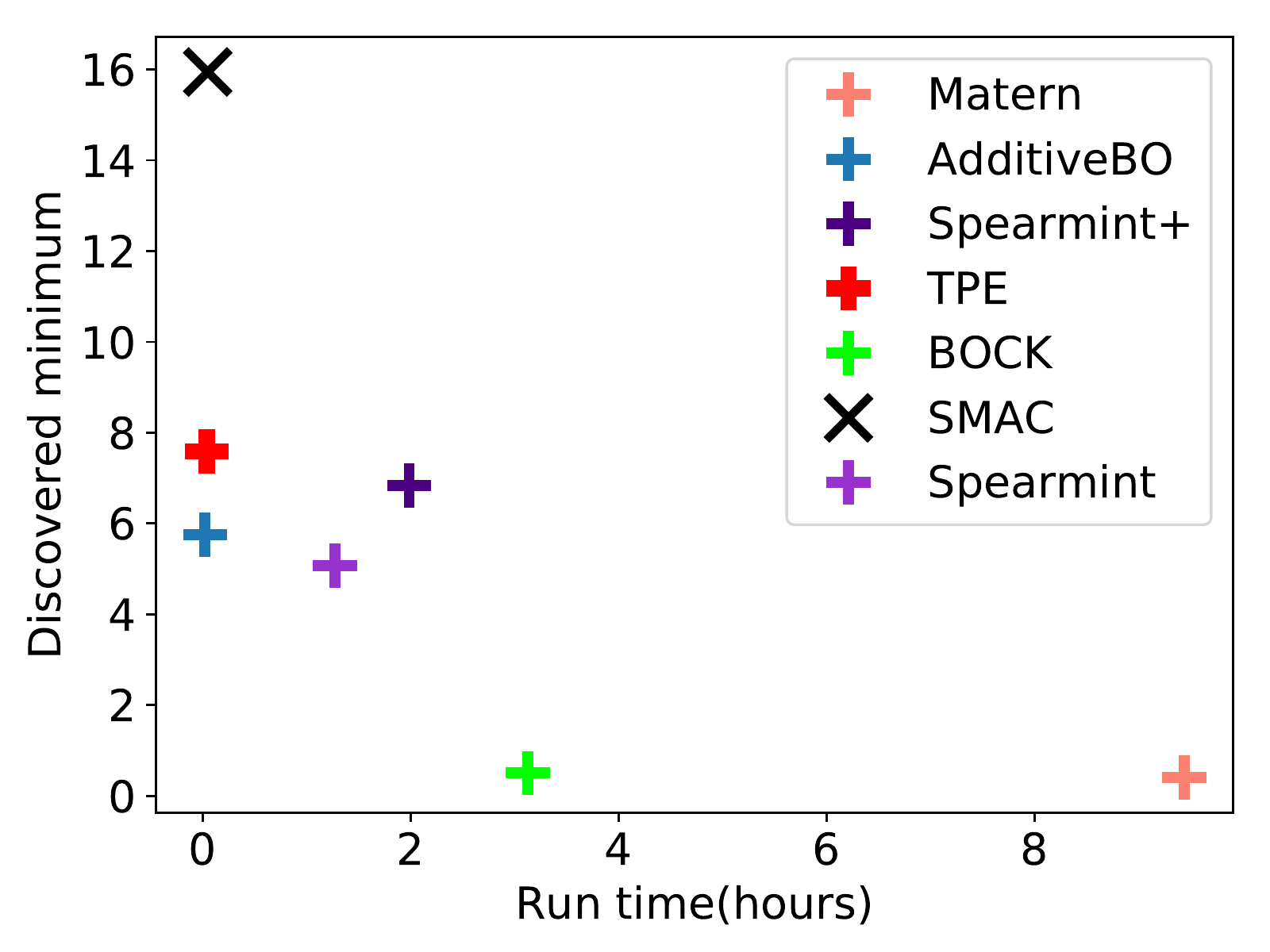}
\endminipage\hfill
\minipage{0.33\textwidth}
\centering
    \textbf{Repeated Hartmann6}\par
  	\includegraphics[width=\linewidth]{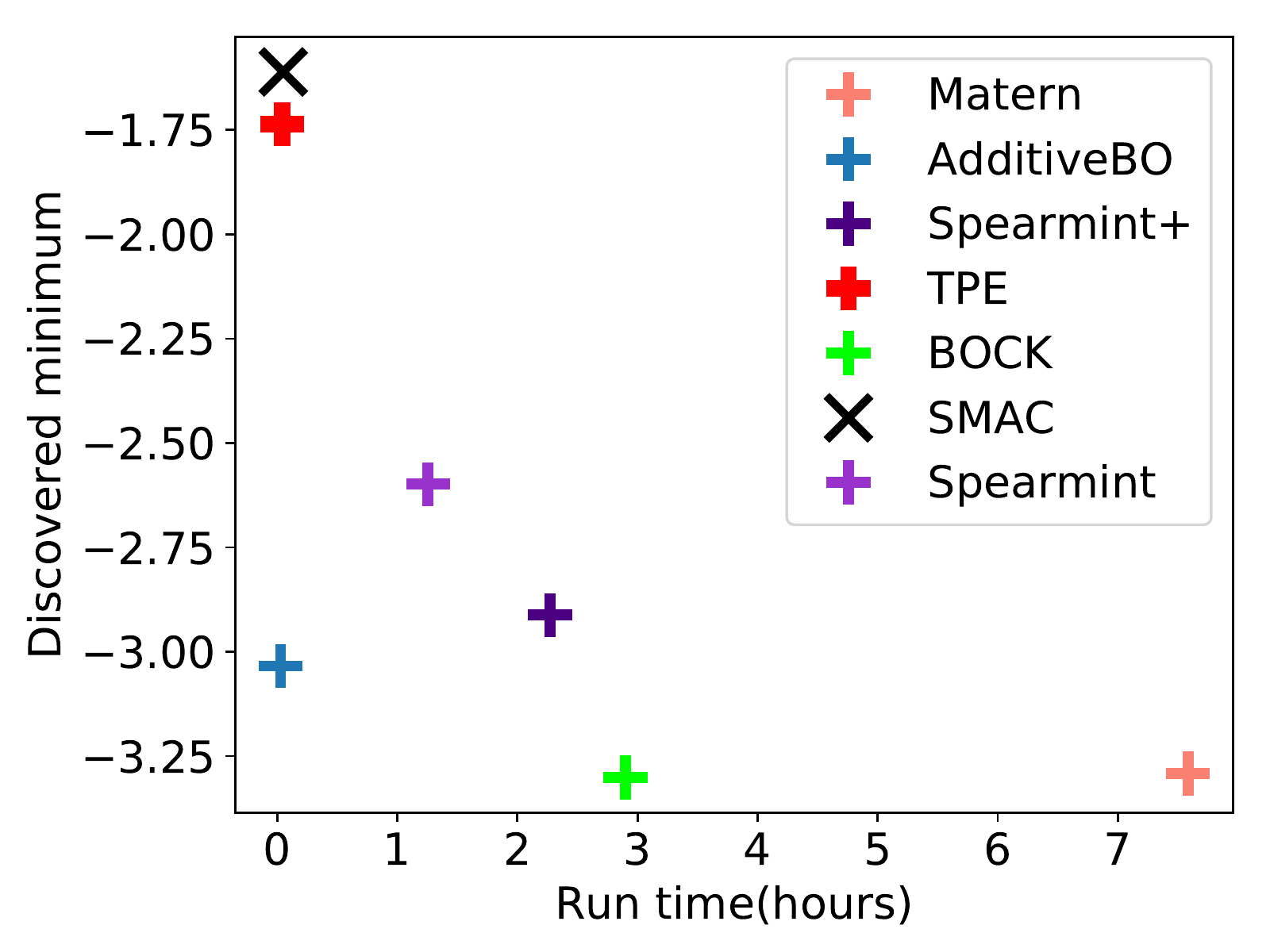}
\endminipage\hfill
\minipage{0.33\textwidth}%
\centering
    \textbf{Levy}\par
  	\includegraphics[width=\linewidth]{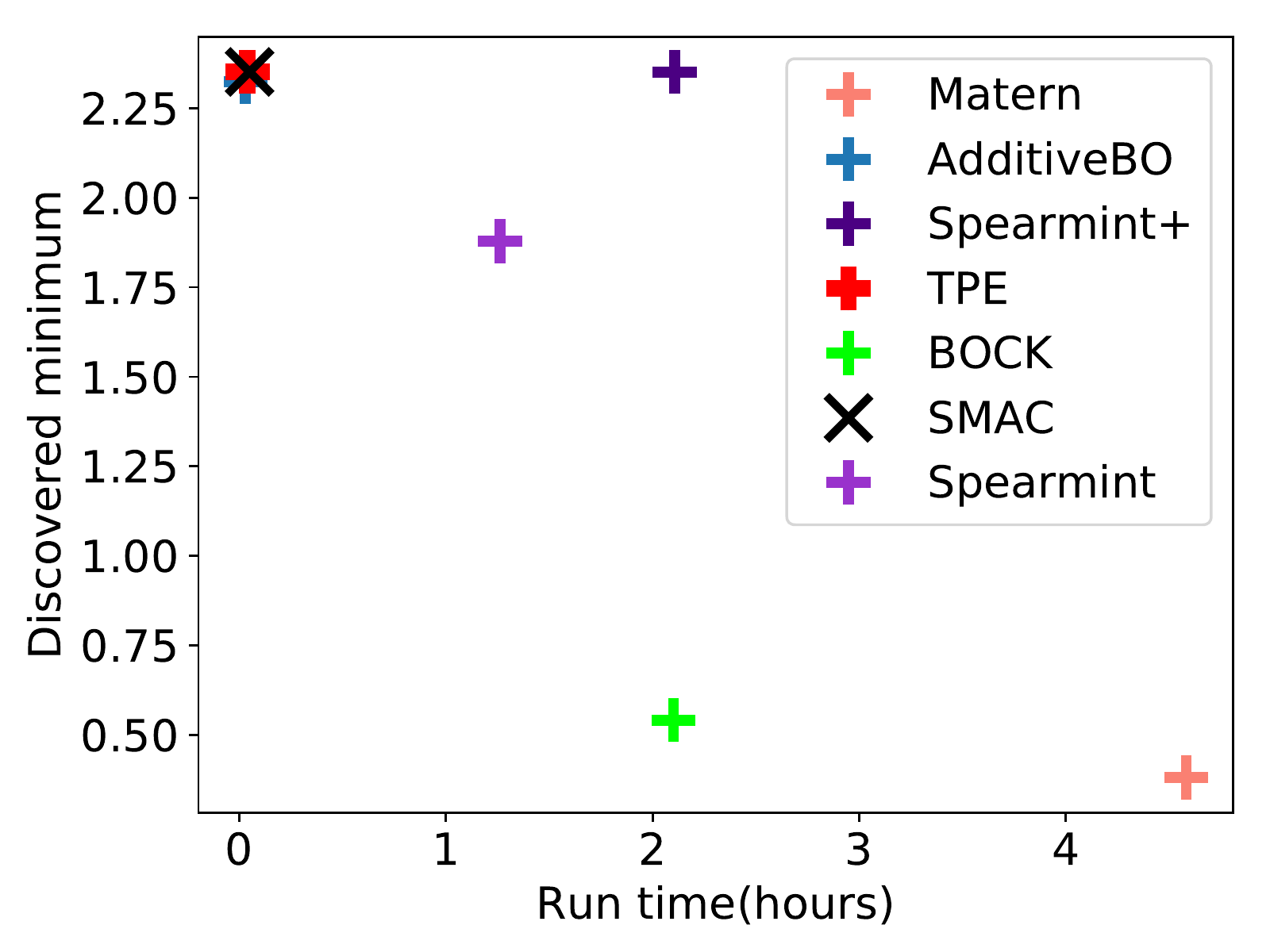}
\endminipage
\caption{
Accuracy of Bayesian Optimization methods vs wall clock time efficiency for the 20-dimensional Repeated Branin, Repeated Hartmann6, Levy benchmark.
BOCK is the closest to the optimum operating point $(0, 0)$. Matern is also accurate enough, although considerably slower, while SMAC and additive BO are faster but considerably less accurate.
}
\label{figure:runtime_vs_mininum}
\end{figure}

\section{Scalability}
We also conduct the experiment to check the scalability of algorithms with other benchmark functions on 20 and 100 dim. The same observation that BOCK is clearly
more efficient and less effected by the increasing dimensionality can be made~\ref{figure:runtime}.

\begin{figure}[!htb]
\minipage{0.33\textwidth}
	\centering
    \textbf{Repeated Branin}\par
  	\includegraphics[width=\linewidth]{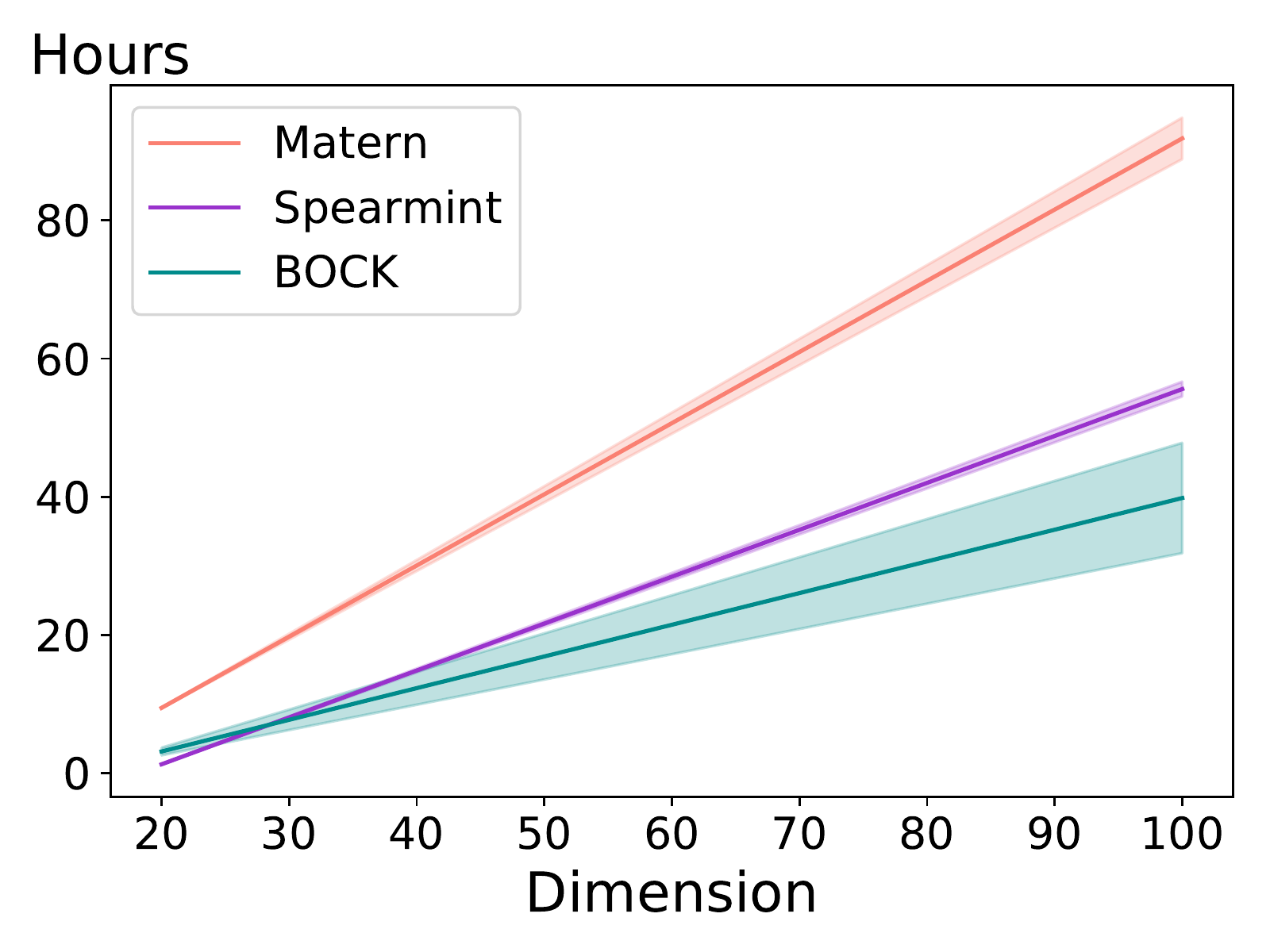}
\endminipage\hfill
\minipage{0.33\textwidth}
\centering
    \textbf{Repeated Hartmann6}\par
  	\includegraphics[width=\linewidth]{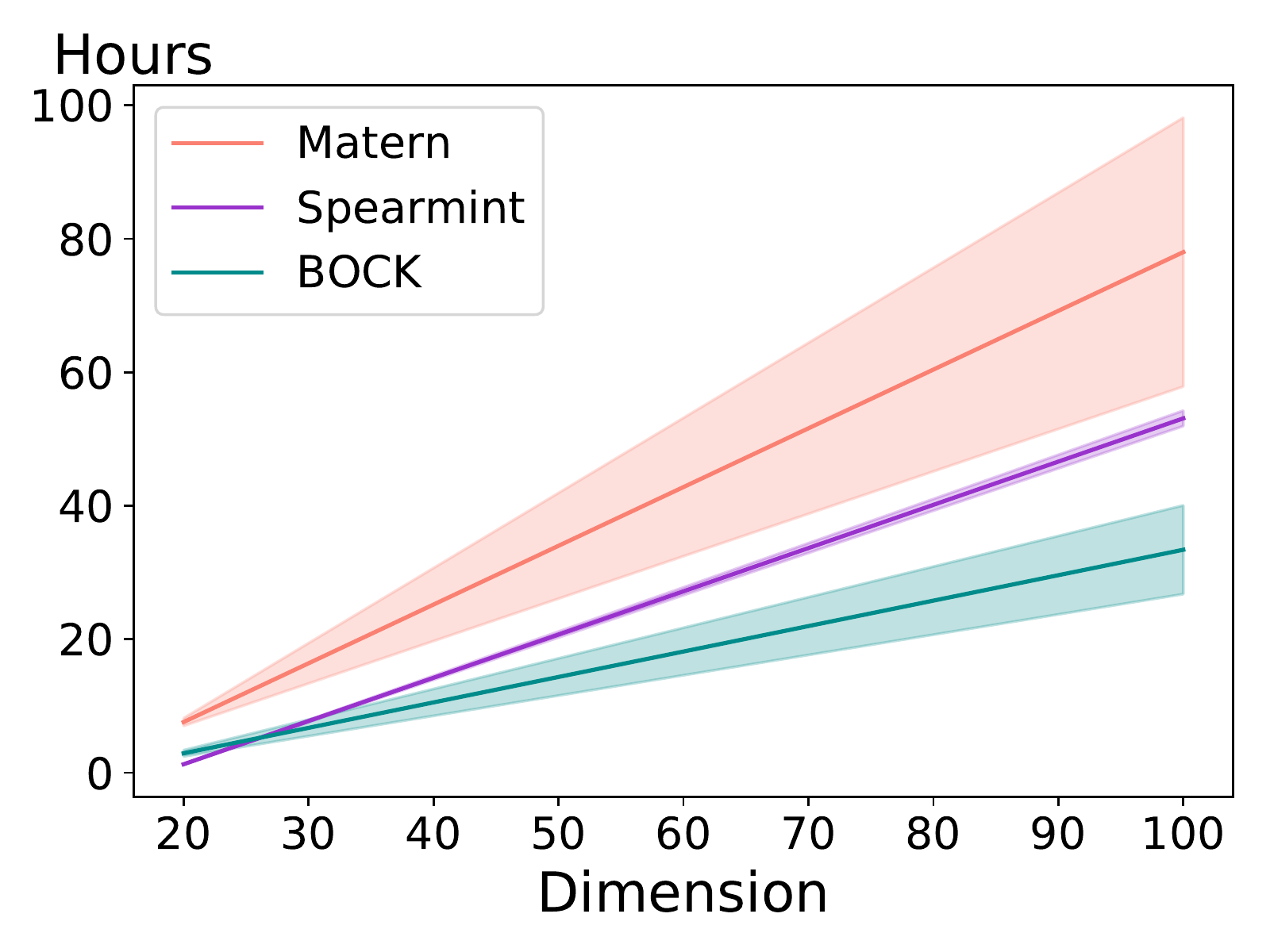}
\endminipage\hfill
\minipage{0.33\textwidth}%
\centering
    \textbf{Levy}\par
  	\includegraphics[width=\linewidth]{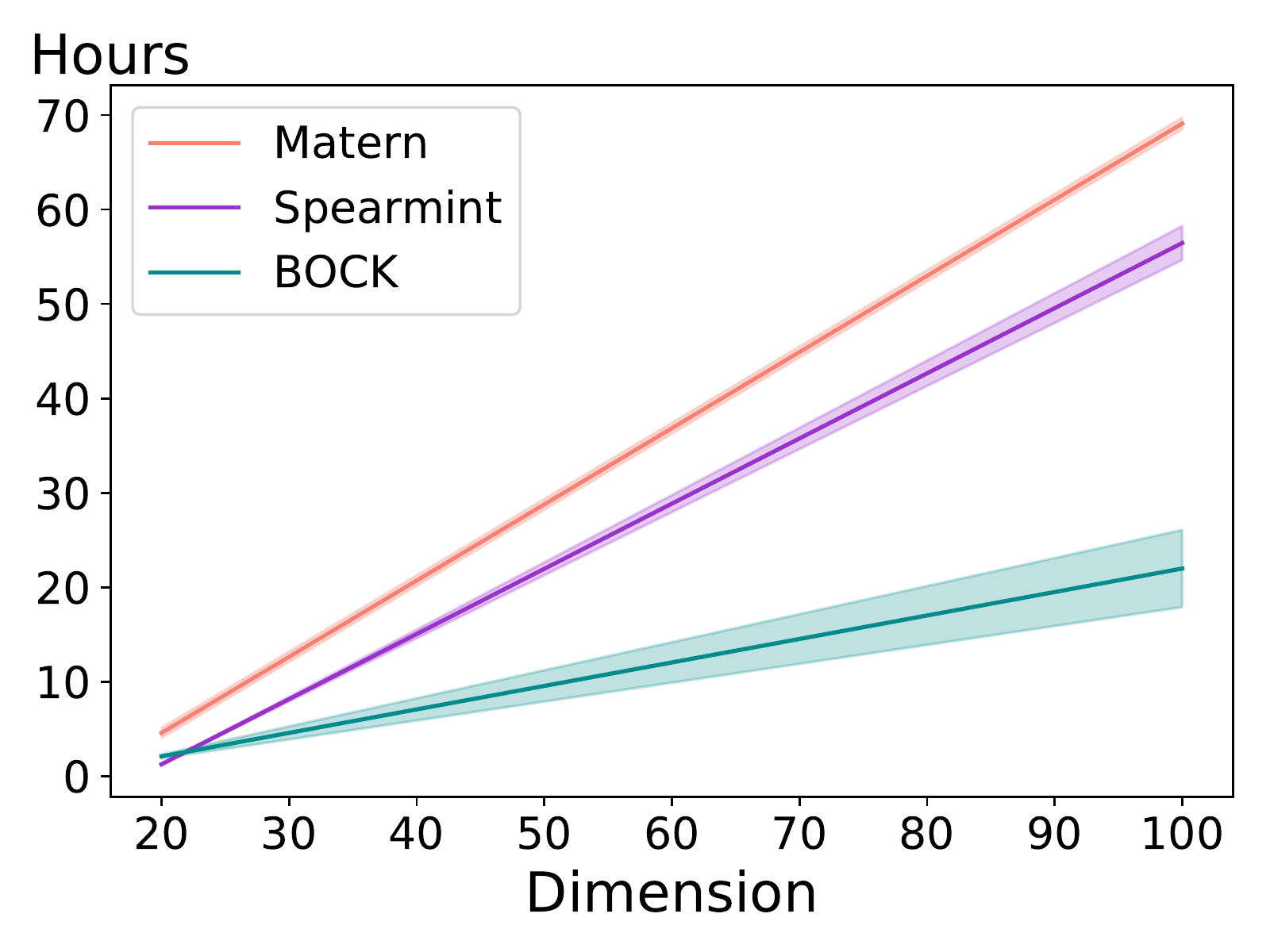}
\endminipage
\caption{
Wall clock time(hours) on the Repeated Branin, Repeated Hartmann6, Levy benchmark for an increasing the number of dimensions (20 and 100 dimensions, using 200 and 600 function evaluations respectively for all methods).
The solid lines and colored regions represent the mean wall clock time and one standard deviation over these 5 runs.
As obtaining the evaluation score $y=f(\x_*)$ on these benchmark functions is instantaneous, the wall clock time is directly related to the computational efficiency of algorithms.
In this figure, we compare BOCK and BOs with relative high accuracy in all benchmark functions, such as Spearmint and Matern.
BOCK is clearly more efficient, all the while being less affected by the increasing number of dimensions.
}
\label{figure:runtime}
\end{figure}

\end{document}